\newtheorem{definition}{Definition}
\newtheorem{theorem}{Theorem}
\newtheorem{lemma}{Lemma}
\newtheorem{remark}{Remark}
\newcommand{\Mat}{\boldsymbol}
\newcommand{\Set}{\mathcal}
\newcommand{\G}{}
\newcommand{\real}{\mathbb{R}}
\DeclareMathOperator{\diag}{diag}
\def\Capacity{{\Gamma}}
\newcommand{\CBest}[1]{\textcolor[rgb]{0.84,0.305,0.418}{\textbf{#1}}}
\newcommand{\CBetter}[1]{\textcolor[rgb]{0.2,0.63,0.254}{\textbf{#1}}}
\newcommand{\jshi}[1]{\textcolor{black}{#1}}
\newcommand{\wangph}[1]{\textcolor{black}{{#1}}}
\icmltitlerunning{SoGCN: Second-Order Graph Convolutional Networks}
\begin{document}

\twocolumn[
\icmltitle{SoGCN: Second-Order Graph Convolutional Networks}

% It is OKAY to include author information, even for blind
% submissions: the style file will automatically remove it for you
% unless you've provided the [accepted] option to the icml2021
% package.

% List of affiliations: The first argument should be a (short)
% identifier you will use later to specify author affiliations
% Academic affiliations should list Department, University, City, Region, Country
% Industry affiliations should list Company, City, Region, Country

% You can specify symbols, otherwise they are numbered in order.
% Ideally, you should not use this facility. Affiliations will be numbered
% in order of appearance and this is the preferred way.
\icmlsetsymbol{equal}{*}

\begin{icmlauthorlist}
\icmlauthor{Peihao Wang}{equal,sist}
\icmlauthor{Yuehao Wang}{equal,sist}
\icmlauthor{Hua Lin}{uisee}
\icmlauthor{Jianbo Shi}{uisee,upenn}
\end{icmlauthorlist}

\icmlaffiliation{sist}{SIST, ShanghaiTech University}
\icmlaffiliation{uisee}{AI Lab, Uisee Technology Ltd.}
\icmlaffiliation{upenn}{CIS, University of Pennsylvania}

\icmlcorrespondingauthor{Peihao Wang}{wangph@shanghaitech.edu.cn}
% \icmlcorrespondingauthor{Eee Pppp}{ep@eden.co.uk}

% You may provide any keywords that you
% find helpful for describing your paper; these are used to populate
% the "keywords" metadata in the PDF but will not be shown in the document
\icmlkeywords{Machine Learning, ICML}

\vskip 0.3in
]

% this must go after the closing bracket ] following \twocolumn[ ...

% This command actually creates the footnote in the first column
% listing the affiliations and the copyright notice.
% The command takes one argument, which is text to display at the start of the footnote.
% The \icmlEqualContribution command is standard text for equal contribution.
% Remove it (just {}) if you do not need this facility.

%\printAffiliationsAndNotice{}  % leave blank if no need to mention equal contribution
\printAffiliationsAndNotice{\icmlEqualContribution} % otherwise use the standard text.

\begin{abstract}
Graph Convolutional Networks (GCN) with multi-hop aggregation is more expressive than one-hop GCN but suffers from higher model complexity. Finding the shortest aggregation range that achieves comparable expressiveness and minimizes this side effect remains an open question.
We answer this question by showing that multi-layer second-order graph convolution (SoGC) is sufficient to attain the ability of expressing polynomial spectral filters with arbitrary coefficients. Compared to models with one-hop aggregation, multi-hop propagation, and jump connections, SoGC possesses filter representational completeness while being lightweight, efficient, and easy to implement. Thereby, we suggest that SoGC is a simple design capable of forming the basic building block of GCNs, playing the same role as $3 \times 3$ kernels in CNNs. We build our Second-Order Graph Convolutional Networks (SoGCN) with SoGC and design a synthetic dataset to verify its filter fitting capability to validate these points. For real-world tasks, we present the state-of-the-art performance of SoGCN on the benchmark of node classification, graph classification, and graph regression datasets.
%Moreover, we make an explorative attempt to incorporate the Gated Recurrent Unit (GRU) into spectral GCNs and further improve our experimental results.

% \jshi{We introduce a second-order graph convolution (SoGC), a maximally localized kernel, that can express a polynomial spectral filter with arbitrary coefficients.  We contrast our SoGC with vanilla GCN, first-order (one-hop) aggregation, and higher-order (multi-hop) aggregation by analyzing graph convolutional layers via generalized filter space.  We argue that SoGC is a simple design capable of forming the basic building block of graph convolution, playing the same role as $3 \times 3$ kernels in CNNs.   We build purely topological Second-Order Graph Convolutional Networks (SoGCN) and demonstrate that SoGCN consistently achieves state-of-the-art performance on the latest benchmark. Moreover, we introduce the Gated Recurrent Unit (GRU) to spectral GCNs. This explorative attempt further improves our experimental results. }
\end{abstract}

\section{Introduction} \label{sec:intro}

Graph Convolutional Networks (GCNs) has gained popularity in recent years.
Researchers have shown that non-localized multi-hop GCNs \citep{liao2019lanczosnet, luan2019break, abu2019mixhop} have better performance than localized one-hop GCNs \citep{defferrard2016convolutional, kipf2016semi, wu2019simplifying}.
However, in Convolutional Neural Networks (CNNs), the localized $3 \times 3$ kernels' expressiveness has been shown in image recognition both experimentally \citep{simonyan2014vggnet} and theoretically \citep{zhou2020universality}.
These contradictory observations motivate us to search for a maximally localized Graph Convolution (GC) kernel with guaranteed feature expressivness.

\begin{table}[tb!]
\centering
\small
\renewcommand{\arraystretch}{1.2}
\scalebox{0.95}{
\begin{tabular}{l c c c}
\hline
\textbf{Kernel} & \textbf{Expressiveness} & \textbf{Localized} & \textbf{Complexity} \\
\hline
Vanilla GCN &  Very Low & $\checkmark$ & $O(ms)$ \\
GIN  & Medium & $\checkmark$ & $O(ms)$ \\
Multi-hop  & Full & $\times$ & $O(Kms)$ \\ [0.5ex]
\hline
SoGCN (Ours)& Full & $\checkmark$ &  $O(ms)$ \\
\hline
\end{tabular}
}
\caption{Comparison of different GC kernels in terms of expressivness, localization, and time complexity. In this table, $m$ represents the number of neighborhoods around a graph node, $s$ is the dimension of input features, and $K$ denotes the aggregation length of multi-hop GCs. We compute the time complexity with respect to the method of \citet{abu2019mixhop}. }
\label{tbl:comparison}
\end{table}

Most existing GCN layers adopt localized graph convolution based on one-hop aggregation scheme as the basic building block \citep{kipf2016semi, hamilton2017inductive, xu2018powerful}.
% Vanilla GCN \citep{kipf2016semi} lumps the graph node self-connection with pairwise neighboring connections. Recent work of \citet{xu2018powerful} disentangles the effect of self-connection by adding an identity mapping. 
The effectiveness of these one-hop models is based on the intuition that a richer class of convolutional functions can be recovered by stacking multiple one-hop layers.
However, extensive works \citep{li2018deeper, oono2019graph, cai2020note} have shown performance limitations of such design, which indicates this hypothesis may not hold.
\citet{liao2019lanczosnet, luan2019break, abu2019mixhop} observed that multi-hop aggregation run in each layer could lead to significant improvement in prediction accuracy.  However, a longer-range aggregator introduces extra hyperparameters and higher computational cost. This design also contradicts the compositionality principle of deep learning that neural networks benefit from deep connections and localized kernels \citep{lecun2015deep}.

% \wangph{Recent studies point out that one-hop GCNs suffer from filter incompleteness \citep{hoang2019revisiting, abu2019mixhop, chenWHDL2020gcnii}. By relating low-pass filtering on the graph spectrum with over-smoothing \citep{hoang2019revisiting}, one can see the one-hop filtering can lead to performance limitations.
% Multi-hop GCNs can represent a richer class of filters \citep{abu2019mixhop} and possess the ability to express a polynomial filter with arbitrary coefficients by stacking GC blocks \citep{luan2019break}.
% However, ``what is the minimum propagation distance needed to construct building blocks for GCN with this representation power?'' remains an open question.}

% \wangph{In this paper, we show that \textit{this minimum propagation distance is two: to achieve filter completeness, stacking two-hop graph kernels (i.e., second-order polynomials in adjacency matrix) is sufficient.} We call this graph kernel Second-Order GC (SoGC), the ``sweet spot" balancing localization and fitting capability.
% To justify our conclusion, we introduce a Layer Spanning Space (LSS) framework to quantify the filter representation power of multi-layer GCs. Our LSS works by mapping composition of GC filters with arbitrary coefficients to polynomial multiplication (Section \ref{sec:gcn_power}).}

\jshi{
Recent studies point out that one-hop GCNs suffer from filter incompleteness \citep{hoang2019revisiting}. By relating low-pass filtering on the graph spectrum with over-smoothing, one can show one-hop filtering could lead to performance limitations.
One natural solution of adding a more complex graph kernel, such as multi-hop connections, seems to work in practical settings \citep{abu2019mixhop}.  The question is: ``what is the simplest graph kernel with the full expressive power of the graph convolution?"  \textit{We show that with a second-order graph kernel of ``two-hop'' connection, we could approximate any complicated graph relationships.}  Intuitively, it means we should extract a contrast between graph nodes that almost-connected (via their neighbor) vs. directly-connected.
}

\jshi{
We construct the two-hop graph kernel with second-order polynomials in an adjacency matrix and call it the Second-Order GC (SoGC).  
We show this Second-Order GC (SoGC) is the ``sweet spot" balancing localization and fitting capability.
To justify our conclusion, we introduce a Layer Spanning Space (LSS) framework to quantify the filter representation power of multi-layer GCs. Our LSS works by mapping GC filters' composition with arbitrary coefficients to polynomial multiplication (Section \ref{sec:gcn_power}).
}

Under this LSS framework, we can show that \textit{SoGCs can approximate any linear GCNs in channel-wise filtering} (Theorem \ref{thm:2nd_order}).
Vanilla GCN and GIN (first-order polynomials in adjacency matrix) cannot represent all polynomial filters in general;
multi-hop GCs (higher-order polynomials in adjacency matrix) do not contribute more expressiveness (Section \ref{sec:other_gcs}). 
In this sense, SoGC is the most localized GC kernel with the full representation power.

To validate our theory, we build our Second-Order Graph Convolutional Networks (SoGCN) by layering up SoGC layers (Section \ref{sec:sogcn}). We reproduce our theoretical results on a synthetic datasets for filtering power testing (Section \ref{sec:synthetic_exp}).
On the public benchmark datasets \citep{dwivedi2020benchmarking}, SoGCN using simple graph topological features consistently boosts the performance of our baseline model (i.e., vanilla GCN) comparable to the state-of-the-art GNN models (with more complex attention/gating mechanisms).
We also verify that our SoGCN fits extensive real-world tasks, including network node classification, super-pixel graph classification, and molecule graph regression (Section \ref{sec:benchmarks_exp}).

To our best knowledge, this work is the first study that identifies the distinctive competence of the two-hop neighborhood in the context of expressing a polynomial filter with arbitrary coefficients.
Our SoGC is a special but non-trivial case of polynomial approximated graph filters \citep{defferrard2016convolutional}.
\citet{kipf2016semi} conducted an ablation study with GC kernels of different orders but missed the effectiveness of the second-order relationships.  The work of \citet{abu2019mixhop} talked about muti-hop graph kernels; however, they did not identify the critical importance of the two-hop form.  In contrast, we clarify the prominence of SoGCs in theories and experiments.

\jshi{
Our research on GCN using pure topologically relationship is orthogonal to those using geometric relations \citep{monti2017geometric, fey2018splinecnn, pei2020geom}, or those with expressive edge features \citep{li2016gated, gilmer2017neural}, and hyper-edges \citep{morris2019weisfeiler, maron2018invariant, maron2019provably}.  It is also independent with graph sampling procedures \citep{rong2019dropedge, hamilton2017inductive}.
}

\begin{figure*}[tb] %[htbp]
%	\begin{subfigure}[b]{0.2\textwidth}
%		\centering
%		\includegraphics[width=\textwidth]{}
%		\caption{$3 \times 3$ Convolution}
%		\label{fig:3x3}
%	\end{subfigure}
	\begin{subfigure}[b]{0.33\textwidth}
		\centering
		\includegraphics[width=0.9\textwidth]{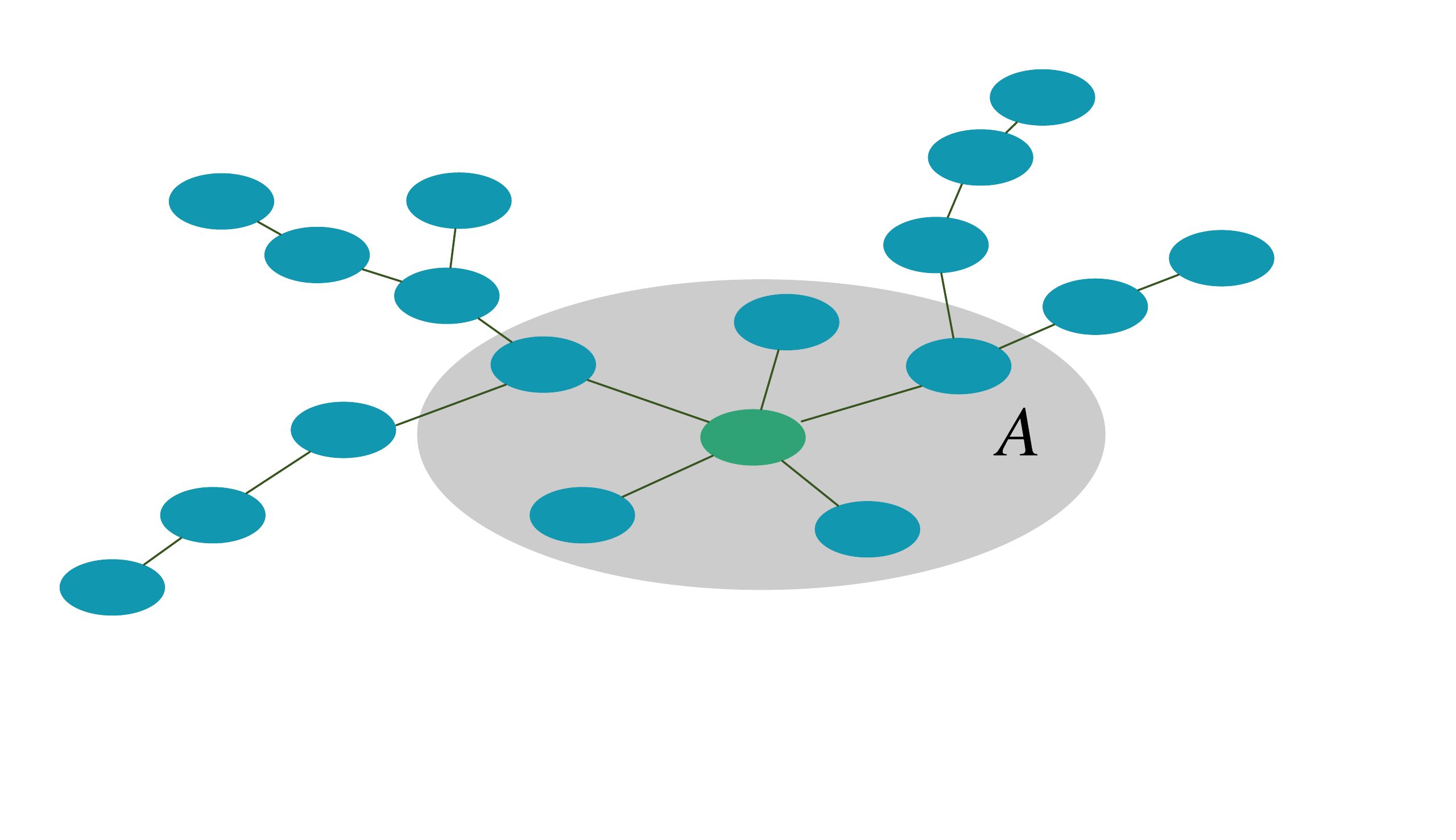}
		\caption{Vanilla GC}
		\label{fig:vgc}
	\end{subfigure}
	\begin{subfigure}[b]{0.33\textwidth}
		\centering
		\includegraphics[width=0.9\textwidth]{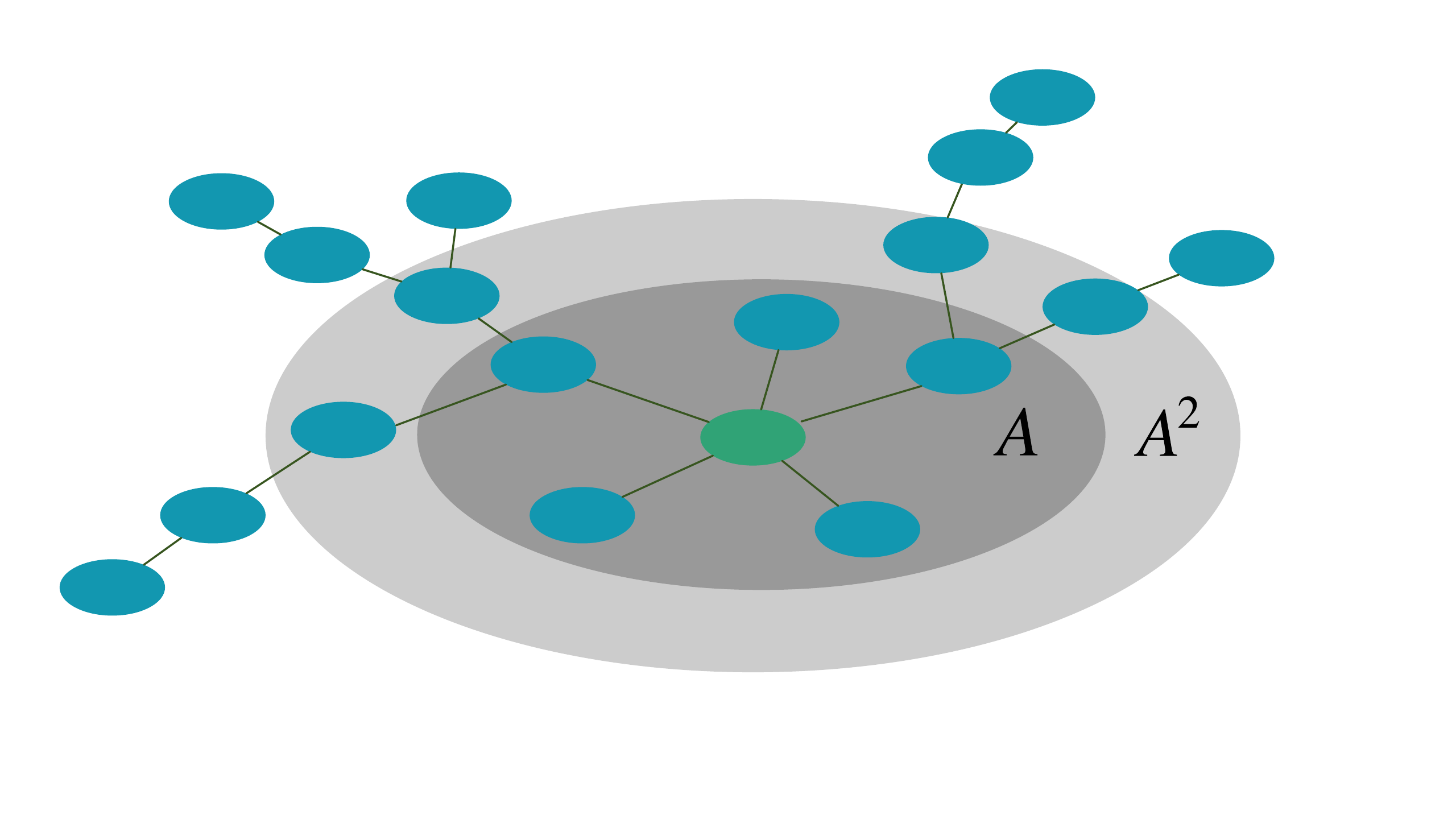}
		\caption{Our SoGC}
		\label{fig:sogc}
	\end{subfigure}
	\begin{subfigure}[b]{0.33\textwidth}
		\centering
		\includegraphics[width=0.9\textwidth]{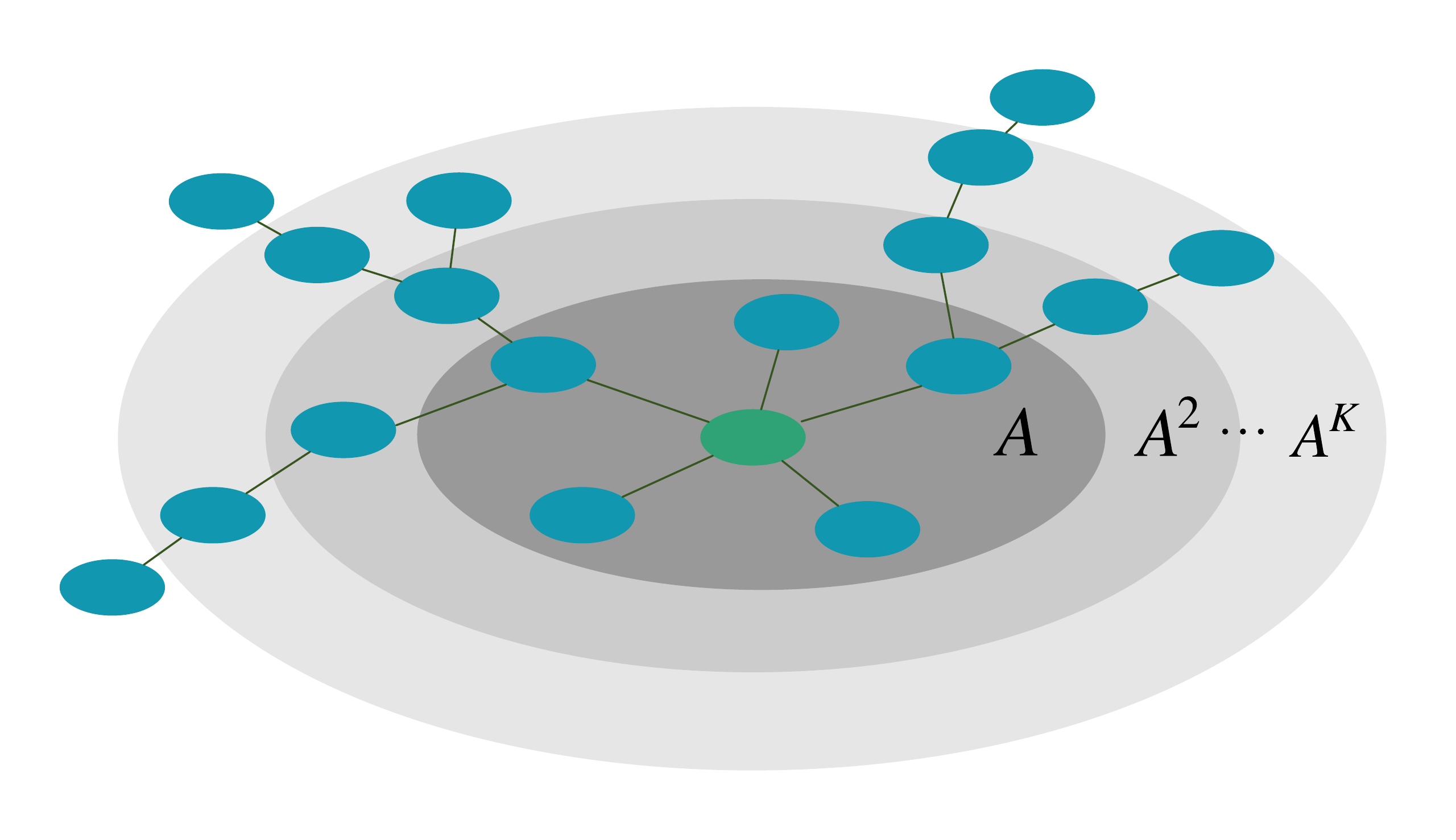}
		\caption{Multi-Hop GC}
		\label{fig:khopgc}
	\end{subfigure}
	
	\caption{
% 		In contrast to $3 \times 3$ convolution, running a linear combination of multi-oriented nodes, spectral GC runs a linear combination of multi-hop neighborhoods.
% 			(a) $3 \times 3$ convolution supported in $\{ 0, 1, 2 \}$ along one axis.
		Vertex domain interpretations of vanilla GC, SoGC, and Multi-Hop GC.
	Denote $A$ the first-hop aggregator, $A^2$ the second-hop aggregator, and $A^K$ the $K$-th hop aggregator. Nodes in the same colored ring share the same weights.
		(a) Vanilla GC only aggregates information from the first-hop neighbor nodes.
		(b) SoGC incorporates additional information from the second-hop (almost-connected) neighborhood.
		(c) Multi-hop GC simply repeats mixing information from every neighborhood within $K$ hops.}
	\label{fig:gcs}
\end{figure*}

\section{Related Work}

\paragraph{Spectral GCNs.}
Graph convolution is defined as element-wise multiplication on graph spectrum \citep{hammond2011wavelets}.
\citet{bruna2013spectral} first proposed spectral GCN with respect to this definition.
ChebyNet \citep{defferrard2016convolutional} approximates graph filters using Chebyshev polynomials.
Vanilla GCN \citep{kipf2016semi, wu2019simplifying} further reduces the GC layer to a degree-one polynomial with lumping of first-order and constant terms.
GIN \citep{xu2018powerful} disentangles the effect of self-connection and pairwise neighboring connections by adding a separate mapping for central nodes.
However, these simplifications causes performance limitations \citep{oono2019graph, cai2020note}.
APPNP \citep{klicpera2018predict} uses Personalized PageRank to derive a fixed polynomial filter.
\citet{bianchi2019graph} proposes a multi-branch GCN architecture to simulate ARMA graph filters.
GCNII \citep{chenWHDL2020gcnii} incorporates identity mapping and initial mapping to relieve over-smoothing problem and deepen GCNs.
However, these models are not easy to implement and introduce additional hyper-parameters.

% Our SoGC incorporates only one hop longer but obtains the full representation power. This design keeps each layer localized, simple, and easy to implement but makes the whole GCN much more powerful. Our work reveals the critical degree of polynomial filters where kernel size is minimized while filter representation power is maximized.

\paragraph{Multi-Hop GCNs.}
To exploit multi-hop information, \citet{liao2019lanczosnet} proposes to use Lanczos algorithm to construct low rank approximations of the graph Laplacian for graph convolution.
\citet{luan2019break} devises two architectures Snowball GCN and Truncated Krylov GCN to capture neighborhoods at various distances.
To simulate neighborhood delta functions, \citet{abu2019mixhop} repeat mixing multi-hop features to identify more topological information.
JKNet \citep{xu2018jknet} combines all feature activation of previous layers to learn adaptive and structure-aware representations of different graph substructures.
These models exhibit the strength of multi-hop GCNs over one-hop GCNs while leaving the propagation length as a hyper-parameter.
In the meanwhile, long-range aggregation in each layer causes higher complexity (Table \ref{tbl:comparison}).
% \citet{abu2019mixhop} also points out that multi-hop information mixing requires sparse weights to distinguish importance of different neighborhoods. A lasso regularization for architecture search causes even more inefficiency.
% Modeling those multi-hop GCNs as our higher-order models, SoGCN possesses the identical representation power but has fixed size and better localization, making SoGC more suitable to be the basic building block in GCNs. It is noteworthy that, although \citet{abu2019mixhop} investigates the two-hop delta function (a Gabor-like filter), their final proposed solution is only a generic class of multi-hop GCNs. The discussion on two-hop delta functions cannot attain our theoretical results.

\paragraph{Expressiveness of GCNs.}
\wangph{
Most of the works on GCN's expressiveness are restricted to the over-smoothing problem: \citet{li2018deeper} first poses the over-smoothing problem; \citet{hoang2019revisiting} indicates GCNs are no more than low-pass filters; \citet{luan2019break,oono2019graph} demonstrate the asymptotic behavior of feature activation to a subspace; \citet{cai2020note} examines the decreasing Dirichlet energy.
These analytic frameworks do not provide constructive improvements on building more powerful GCN.
% Unlike them, our established LSS framework can identify specific issues of GCNs with algebraic and geometric interpretations. The over-smoothing problem can be formulated as one of our sub-problems (Section \ref{sec:other_gcs}). In this sense, SoGCN solves a more general expressiveness issue than those relieving over-smoothing problem only \citep{xu2018jknet, rong2019dropedge, chen2020measuring}.
\citet{chenWHDL2020gcnii} first proposed to assess GCN's overall expressiveness by tackling the ability of expressing polynomial filters with arbitrary coefficients. But their theory is only applicable for transductive learning on a single graph, and does not upper bound the degree of the graph polynomial filters.
% We investigate the overall expressive power of GCNs by discussing filter completeness.This direction is orthogonal to those studying message-passing GNNs \citep{scarselli2008computational} and Weisfeiler-Leman GNNs \citep{xu2018powerful, morris2019weisfeiler}.
}

\section{Second-Order Graph Convolution} \label{sec:sogc}

We begin by introducing our notation.
We are interested in learning on a finite graph set $\Set{G} = \left\{ \G{G}_1, \cdots, \G{G}_{\lvert \Set{G} \rvert} \right\}$.
Assume each graph $\G{G} \in \Set{G}$ is simple and undirected, associated with a finite vertex set $\Set{V}(\G{G})$, an edge set $\Set{E}(\G{G}) \subseteq \Set{V}(\G{G}) \times \Set{V}(\G{G})     $, and a symmetric normalized adjacency matrix $\Mat{A}(\G{G})$ \citep{chung1997spectral, shi2000normalized}.
Without loss of generality and for simplicity, $\lvert \Set{V}(\G{G}) \rvert = N$ for every $\G{G} \in \Set{G}$.
We denote single-channel signals supported in graph $\G{G} \in \Set{G}$ as $\Mat{x} \in \real^N$, a vectorization of function $\Set{V}(\G{G}) \rightarrow \real$.
% We denote multi-channel features by a data matrix $\Mat{X} \in \real^{N \times C}$, where $C$ is the number of channels.

% \paragraph{Spectral Graph Convolution}
Graph Convolution (GC) is defined as Linear Shift-Invariant (LSI) operators to adjacency matrices \citep{sandryhaila2013discrete}.
This property enables GC to extract features regardless of where the local structure falls.
A single-channel GC can be written as a mapping $f : \Set{G} \times \real^N \rightarrow \real^N$.  According to \citet{defferrard2016convolutional}, a GC can be approximated by a polynomial in adjacency matrix (Figure \ref{fig:khopgc}) \footnote{We can replace the Laplacian matrix $\Mat{L}$ in \citet{defferrard2016convolutional} with the normalized adjacency matrix $\Mat{A}$ since $\Mat{L} = \Mat{I} - \Mat{A}$.}:
\begin{equation} \label{equ:spectral_gc}
	f_{\Mat{\theta}}(\G{G}, \Mat{x})  = \sum_{k=0}^{K} \theta_{k} \Mat{A}(\G{G})^k \Mat{x},
\end{equation}
where $\Mat{\theta} = \begin{bmatrix} \theta_0 & \cdots & \theta_K \end{bmatrix}^T \in \real^{K+1}$ represents the kernel weights.
The Equation \ref{equ:spectral_gc} indicates that graph convolution can be interpreted as a linear combination of features aggregated by $\Mat{A}(G)^k$.  Thereby, the hyperparameter $K$ can reflect the localization of a GC kernel.

Previous work of \citet{kipf2016semi} simplified the Equation \ref{equ:spectral_gc} to a one-hop kernel, so-called vanilla GC (Figure \ref{fig:vgc}).
% We formulate it with one of its variants first-order GC (Figure \ref{fig:fogc}) as below \footnote{Equation \ref{equ:fogc} was first proposed in \citet{xu2018powerful}. We call it first-order GC as the first-order polynomial in adjacency matrix.}:
We formulate vanilla GC as below:
\begin{align}
	\label{equ:vgc} & f_{1}(G, \Mat{x}) = \theta \left( \Mat{A}(G) + \Mat{I} \right) \Mat{x}.
\end{align}
% \begin{align}
% 	\label{equ:vgc} & f_{0}(G, \Mat{x}) = \theta \left( \Mat{A}(G) + \Mat{I} \right) \Mat{x}, \\
% 	\label{equ:fogc} & f_{1}(G, \Mat{x}) = \left( \theta_1 \Mat{A}(G) + \theta_0 \Mat{I} \right) \Mat{x}.
% \end{align}
% \begin{eqnarray} \label{equ:other_gcs}
% 	f_{0}(G, \Mat{x}) = \theta \left( \Mat{A}(G) + \Mat{I} \right) \Mat{x},
% 	& f_{1}(G, \Mat{x}) = \left( \theta_1 \Mat{A}(G) + \theta_0 \Mat{I} \right) \Mat{x}.
% \end{eqnarray}
We are interested in the overall graph convolution networks' representation power of expressing a polynomial filter (cf. Equation \ref{equ:spectral_gc}) with arbitrary degrees and coefficients. At first glance, one-hop GC (cf. Equation \ref{equ:vgc}) can approximate any high-order GC kernels by stacking multiple layers. However, that is not the case (See formal arguments in Section \ref{sec:other_gcs}).
In contrast, when plugging the second-order term into vanilla GC, we will show that this approximation ability can be attained (See formal arguments in Theorem \ref{thm:2nd_order}). We name this improved design Second-Order Graph Convolution (SoGC), as it can be written as the second-order polynomial in adjacency matrix:
\begin{equation} \label{equ:sogc}
	f_{2}(G, \Mat{x}) = \left( \theta_2 \Mat{A}(G)^2 + \theta_1 \Mat{A}(G) + \theta_0 \Mat{I} \right) \Mat{x}.
\end{equation}
We illustrate its vertex-domain interpretation in Figure \ref{fig:sogc}.
The critical insight is that graph filter approximation can be viewed as a polynomial factorization problem. It is known that \textit{any univariate polynomial can be factorized into sub-polynomials of degree two}.  Based on this fact, we show by stacking enough SoGCs (and varying their parameters) can achieve decomposition of any polynomial filters.

\begin{figure*}[tb] %[htbp]
	\centering
	\includegraphics[width=\textwidth]{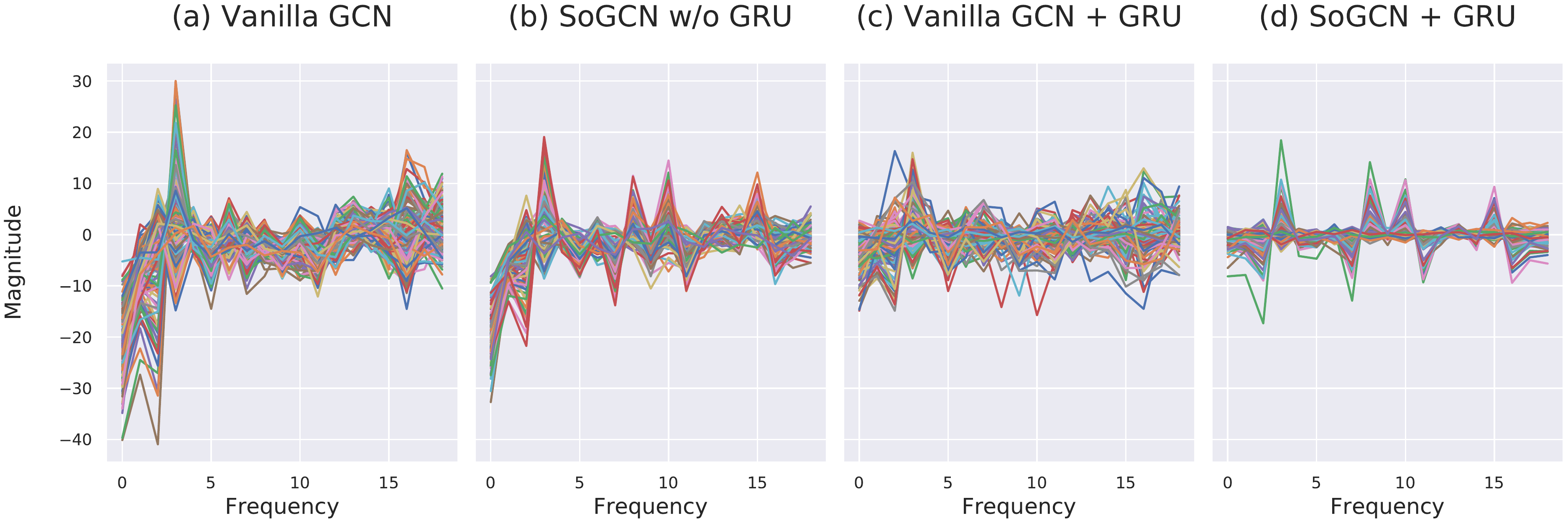}
	\caption{
	Visualizing output activation in graph spectrum domain for vanilla GCN, SoGCN, and GRU variants. The test is conducted on a graph from the ZINC dataset. The spectrum is defined as a projection of activation functions on the graph eigenvectors. SoGCN preserved higher-order spectrum, while vanilla GCN shows over-smoothing. See Appendix \ref{supp:more_vis_spectrum} for more visualizations on the ZINC dataset.
	}
	\label{fig:vis_spectrum}
\end{figure*}

\subsection{Representation Power Justification} \label{sec:gcn_power}

To be more precise, we introduce our Layer Spanning Space (LSS) framework in this subsection and mathematically prove that arbitrary GC kernel can be decomposed into finite many SoGC kernels.

First, to illustrate overall graph filter space and filters expressed by a single layer, we define a \textit{graph filter space}, in which every polynomial filter has degree no more than $K$:
\begin{definition}{(Graph filter space)} \label{dfn:filter_space}
	Suppose the parameter space is $\real$. The Linear Shift-Invariant (LSI) graph filter space\footnote{See Appendix \ref{supp:remark_def1} for a remark on how $\Set{F}_K$ relates with the term ``Linear Shift-Invariant".} of degree $K > 0$ with respect to a finite graph set $\Set{G}$ is defined as $\Set{F}_K = \left\{ f_{\Mat{\theta}} : \Set{G} \times \real^N \rightarrow \real^N, \forall \Mat{\theta} \in \real^{K+1} \right\}$,
	where $f_{\Mat{\theta}}$ follows the definition in Equation \ref{equ:spectral_gc}.
\end{definition}
We further provide Definition 2 and Lemma \ref{lem:dim} to discuss the upper limit of $\Set{F}_K$'s dimension.
\begin{definition}{(Spectrum capacity)} \label{dfn:spectrm_capacity}
	Let spectrum set $\Set{S}({\Set{G}}) = \{ \lambda : \lambda \in \Set{S}(\Mat{A}(\G{G})), \forall \G{G} \in \Set{G} \}$, where $\Set{S}(\Mat{A})$ denotes the eigenvalues of $\Mat{A}$.
	Spectrum capacity $\Capacity = \lvert \Set{S}(\Set{G}) \rvert$ is the cardinality of all distinct graph eigenvalues.
	In particular, $\Capacity = (N-1) \lvert \Set{G} \rvert$ if every graph adjacency matrix has no common eigenvalues other than $1$.
\end{definition}

\begin{lemma} \label{lem:dim}
	Filter space $\Set{F}_K$ with degree $K > 0$ has dimension $\min \{ K+1, \Capacity \}$ as a vector space.
	%	$\Set{A}$ is isomorphic to $\real_{K} [x]$ if and only if $K \le \Capacity$, where $\real_{K} [x]$ denotes polynomials over real with degree $\le K$.
\end{lemma}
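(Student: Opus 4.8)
The plan is to compute $\dim \Set{F}_K$ via rank–nullity applied to the natural parametrization map. Define the linear map $\Phi : \real^{K+1} \to \Set{F}_K$ by $\Phi(\Mat{\theta}) = f_{\Mat{\theta}}$, where $\Set{F}_K$ carries its pointwise vector space structure (it is closed under the vector operations since $f_{\alpha\Mat{\theta} + \beta\Mat{\theta}'} = \alpha f_{\Mat{\theta}} + \beta f_{\Mat{\theta}'}$). This $\Phi$ is linear, and it is surjective by the very definition of $\Set{F}_K$. Hence $\dim \Set{F}_K = (K+1) - \dim \ker \Phi$, and the whole problem reduces to identifying $\ker \Phi$.

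Next I would characterize the kernel spectrally. We have $\Mat{\theta} \in \ker \Phi$ iff $\sum_{k=0}^K \theta_k \Mat{A}(\G{G})^k \Mat{x} = \Mat{0}$ for every $\G{G} \in \Set{G}$ and every $\Mat{x} \in \real^N$, i.e. iff the matrix $p_{\Mat{\theta}}(\Mat{A}(\G{G})) = \Mat{0}$ for all $\G{G}$, where $p_{\Mat{\theta}}(t) = \sum_{k=0}^K \theta_k t^k$. Since each $\Mat{A}(\G{G})$ is real symmetric, it is orthogonally diagonalizable, $\Mat{A}(\G{G}) = \Mat{U}\Mat{\Lambda}\Mat{U}^T$, so $p_{\Mat{\theta}}(\Mat{A}(\G{G})) = \Mat{U}\, p_{\Mat{\theta}}(\Mat{\Lambda})\, \Mat{U}^T$ is the zero matrix exactly when $p_{\Mat{\theta}}$ vanishes at every eigenvalue of $\Mat{A}(\G{G})$. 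Ranging over all graphs, $\Mat{\theta} \in \ker \Phi$ iff $p_{\Mat{\theta}}$ vanishes on the entire spectrum set $\Set{S}(\Set{G})$, a finite set (since $\Set{G}$ is finite and each graph has $N$ vertices) of cardinality $\Capacity$.

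The third step is to count the degree-$\le K$ polynomials vanishing on $\Capacity$ distinct reals $\lambda_1,\dots,\lambda_\Capacity$. Identifying $\real^{K+1}$ with the space of polynomials of degree at most $K$, consider the evaluation map $E : p \mapsto (p(\lambda_1),\dots,p(\lambda_\Capacity)) \in \real^{\Capacity}$, for which $\ker \Phi = \ker E$ by the previous paragraph. If $K+1 \le \Capacity$, then a nonzero polynomial of degree $\le K$ has at most $K < \Capacity$ roots, so $E$ is injective, $\dim\ker\Phi = 0$, and $\dim\Set{F}_K = K+1$. If $K+1 > \Capacity$, the matrix of $E$ in the monomial basis contains a $\Capacity \times \Capacity$ Vandermonde block on distinct nodes, which is invertible, so $E$ is surjective, $\dim\ker\Phi = (K+1)-\Capacity$, and $\dim\Set{F}_K = \Capacity$. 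In either case $\dim\Set{F}_K = \min\{K+1,\Capacity\}$.

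I expect no serious obstacle: the only points needing care are (i) that $f_{\Mat{\theta}} = 0$ in $\Set{F}_K$ forces the operator $p_{\Mat{\theta}}(\Mat{A}(\G{G}))$ to be the zero matrix rather than merely zero on some vectors — this is exactly where symmetry and diagonalizability of $\Mat{A}(\G{G})$ are used — and (ii) the elementary interpolation fact that evaluation at $\Capacity$ distinct points is surjective from the degree-$\le K$ polynomials whenever $K+1 \ge \Capacity$. Both are standard, so the argument is essentially a clean application of rank–nullity once the kernel is translated into a root-counting statement.
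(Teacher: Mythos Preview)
Your argument is correct and in spirit very close to the paper's, though the packaging differs. The paper first builds a block-diagonal matrix $\Mat{T} = \diag(\Mat{A}(\G{G}_1),\dots,\Mat{A}(\G{G}_{|\Set{G}|}))$, shows $\Set{F}_K \cong \Set{T}_K = \{\sum_k \theta_k \Mat{T}^k\}$ via a ring isomorphism $\pi$, and then computes $\dim \Set{T}_K$ using the minimal polynomial $m(x)$ of $\Mat{T}$ (whose degree is exactly $\Capacity$): for $K+1 < \Capacity$ linear independence of $\{\Mat{I},\dots,\Mat{T}^K\}$ follows from minimality of $m$, and for $K+1 \ge \Capacity$ polynomial division by $m$ shows $\{\Mat{I},\dots,\Mat{T}^{\Capacity-1}\}$ is a basis. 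You instead apply rank--nullity to the parametrization $\Phi$ and translate $\ker\Phi$ directly into the vanishing locus on $\Set{S}(\Set{G})$ via diagonalizability, then finish with a root-count/Vandermonde argument. The two are equivalent reformulations of the same fact (the minimal polynomial of a diagonalizable matrix is the product of $(x-\lambda)$ over distinct eigenvalues), but your route is more self-contained and avoids the auxiliary object $\Mat{T}$ and the external citation; the paper's route has the advantage that the machinery $(\Mat{T},\pi)$ is reused to prove Lemma~\ref{lem:tau}. One small quibble: in your closing remark~(i), the passage from ``$f_{\Mat{\theta}}(G,\Mat{x})=\Mat{0}$ for all $\Mat{x}$'' to ``$p_{\Mat{\theta}}(\Mat{A}(G))=\Mat{0}$ as a matrix'' needs no symmetry at all; diagonalizability is only used in the next step, to equate $p_{\Mat{\theta}}(\Mat{A}(G))=\Mat{0}$ with $p_{\Mat{\theta}}$ vanishing on the spectrum.
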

Lemma \ref{lem:dim} follows from Theorem 3 of \citet{sandryhaila2013discrete}.
See the complete proof in Appendix \ref{supp:prf_dim}. $\qed$

According to Lemma \ref{lem:dim}, one can define an ambient filter space $\Set{A}$ of degree $\Capacity-1$ (i.e., $\Set{F}_{\Capacity-1}$).  Suppose a GCN consists of $K$-hop convolutional kernels $f^{(l)}, l = 1, \cdots, L$, where the superscript indicates the layer number, and $L$ denotes the network depth. We can consider each layer is sampled from $\Set{F}_K$. We intend to justify this GCN's filter representation power via its Layer Spanning Space (LSS):
\begin{equation} \label{equ:khop_lss}
    \Set{F}_K^L = \left\{ F : F = f^{(1)} \circ \cdots \circ f^{(L)}, \forall f^{(l)} \in \Set{F}_K \right\},
\end{equation}
where the whole LSS is constructed by varying parameters of $f^{(l)}$ over $\real^{K+1}$. When $\Set{A} \subseteq \Set{F}_K^L$, the LSS covers the entire ambient space, then we say the GCN composed of $L$ GC kernels in $\Set{F}_K$ has \textit{full filter representation power}.

As $\Set{F}_K$ is a function space, $\Set{F}_K^L$ can be analytically tricky. To investigate this space, we define a mapping $\tau: \Set{F}_K \rightarrow \real_K[x]$, where $\real_K[x]$ denotes a polynomial vector space of degree at most $K$:
\begin{equation} \label{equ:ring_iso}
    \tau: \sum_{k=0}^{K} \theta_{k} \Mat{A}(\G{G})^k \mapsto \sum_{k=0}^{K} \theta_{k} x^k,
\end{equation}
We provide Lemma \ref{lem:tau} to reveal a good property of $\tau$:
\begin{lemma} \label{lem:tau}
	$\tau$ is a ring isomorphism when $K \le \Capacity - 1$.
	%	$\Set{A}$ is isomorphic to $\real_{K} [x]$ if and only if $K \le \Capacity$, where $\real_{K} [x]$ denotes polynomials over real with degree $\le K$.
\end{lemma}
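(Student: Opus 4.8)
The plan is to show $\tau$ is (i) a well-defined linear map, (ii) a ring homomorphism, and (iii) bijective, under the hypothesis $K \le \Capacity - 1$. The target ring $\real_K[x]$ must be understood as polynomials of degree at most $K$ with multiplication taken modulo some fixed degree-$(\Capacity)$ polynomial (or, more cleanly, one identifies both sides with the coordinate ring obtained by evaluating at the $\Capacity$ distinct eigenvalues); I would make this precise first, because ``polynomial vector space of degree at most $K$'' is not closed under ordinary multiplication and the statement only makes sense once the ring structure on the codomain is pinned down. The natural choice is the quotient $\real[x] / (m(x))$ where $m(x) = \prod_{\lambda \in \Set{S}(\Set{G})} (x - \lambda)$ has degree $\Capacity$; since $K \le \Capacity - 1$, the residues of degree $< \Capacity$ give a vector-space isomorphism with $\real_K[x]$ when $K = \Capacity-1$, and an injection realizing $\Set{F}_K$ as a subring-image otherwise.

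First I would argue well-definedness and linearity: the map $\sum \theta_k \Mat{A}^k \mapsto \sum \theta_k x^k$ is clearly linear in $\Mat{\theta}$, and it is well-defined precisely because, by Lemma \ref{lem:dim}, the monomials $\Mat{I}, \Mat{A}, \dots, \Mat{A}^K$ are linearly independent as operators on $\bigoplus_{\G{G}} \real^N$ when $K \le \Capacity - 1$ (the dimension count gives $K+1$, matching the number of free coefficients), so distinct coefficient vectors give distinct filters and $\tau$ has a genuine inverse on its image. Next, for the multiplicative structure, I would note that composition of filters $f_{\Mat{\theta}} \circ f_{\Mat{\phi}}$ corresponds to the operator $\left(\sum_k \theta_k \Mat{A}^k\right)\left(\sum_j \phi_j \Mat{A}^j\right)$, and since powers of $\Mat{A}$ commute and satisfy only the relations forced by the minimal polynomial $m(\Mat{A}) = 0$ (on the direct-sum space, the minimal polynomial is exactly $m$ by the definition of $\Set{S}(\Set{G})$ and diagonalizability of symmetric $\Mat{A}(\G{G})$), this product equals the image under $\tau^{-1}$ of the polynomial product $\left(\sum \theta_k x^k\right)\left(\sum \phi_j x^j\right) \bmod m(x)$. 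Hence $\tau$ intertwines composition with multiplication in $\real[x]/(m(x))$.

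For bijectivity: injectivity is the linear-independence statement already invoked; surjectivity onto $\real[x]/(m(x))$ (equivalently onto $\real_{\Capacity-1}[x]$) follows because $\dim \Set{F}_{\Capacity-1} = \Capacity = \dim \real[x]/(m(x))$ by Lemma \ref{lem:dim}, and an injective linear map between equal-dimensional spaces is surjective; for $K < \Capacity-1$ one gets an isomorphism onto the subspace $\real_K[x]$, which is all the statement claims. The main obstacle I anticipate is purely expository rather than technical: stating the codomain's ring structure so that ``ring isomorphism'' is literally true — i.e. recognizing that $\real_K[x]$ here is being used as a stand-in for the quotient ring $\real[x]/(m(x))$ (for $K=\Capacity-1$) and checking that the $\Mat{A}(\G{G})$'s having no common eigenvalue besides $1$ (Definition \ref{dfn:spectrm_capacity}) is what makes the minimal polynomial of the block-diagonal shift operator have degree exactly $\Capacity$, so that reduction mod $m$ is forced and $\tau$ respects it. Once that bookkeeping is in place, everything else is the dimension count from Lemma \ref{lem:dim} plus commutativity of polynomials in a single matrix.
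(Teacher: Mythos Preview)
Your proposal is correct and the underlying mechanism (dimension count from Lemma~\ref{lem:dim} plus the minimal polynomial of the simultaneous shift operator having degree $\Capacity$) is the same one the paper relies on. The organizational route, however, differs: the paper factors $\tau$ as a composition $\varphi \circ \pi$ through an auxiliary matrix space $\Set{T}_K$ of polynomials in the single block-diagonal matrix $\Mat{T} = \diag(\Mat{A}(\G{G}_1),\dots,\Mat{A}(\G{G}_{\lvert \Set{G}\rvert}))$, first proving $\pi:\Set{F}_K \to \Set{T}_K$ is a ring isomorphism by a direct injectivity check on the blocks, then proving $\varphi:\Set{T}_K \to \real_K[x]$ is one by the dimension equality $\dim \Set{T}_K = \dim \real_K[x]$ (since $\deg m(x) = \Capacity$). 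You instead argue directly on $\Set{F}_K$, with the block-diagonal operator appearing only implicitly as ``operators on $\bigoplus_{\G{G}} \real^N$.'' Your route is a bit more economical, and you are more careful than the paper about the codomain's ring structure: the paper treats $\real_K[x]$ loosely as both vector space and ring and never spells out the quotient by $m(x)$. In the paper's downstream use (Theorem~\ref{thm:2nd_order}) the total degree of the factored product never exceeds $\Capacity-1$, so the reduction mod $m$ is silent there, but your point that the literal ``ring isomorphism'' statement requires the codomain to be $\real[x]/(m(x))$ (or equivalently the coordinate ring on $\Set{S}(\Set{G})$) is well taken.
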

The proof can be found in Appendix \ref{supp:prf_ring_iso}. $\qed$

This ring isomorphism signifies that the composition of filters in $\Set{F}_K$ is identical to polynomial multiplication. Therefore, one can study the LSS through the polynomials that can be factorized into the corresponding sub-polynomials. For example, Equation \ref{equ:khop_lss} is identical to:
\begin{equation}
	\Set{F}_K^L \simeq \left\{p(x) : p(x) = \prod_{l=1}^{L} \sum_{k=0}^{K} \theta^{(l)}_k x^k, \theta^{(l)}_k \in \real \right\}.
\end{equation}

In the rest of this subsection, we will show that $\lceil \frac{\Capacity - 1}{2} \rceil$-layer SoGCs (i.e., $\Set{F}_2$) can attain full representation power. That is, $\Set{F}_2^L$ covers the whole ambient filter space $\Set{A}$ when $L$ is as large as $\lceil (\Capacity - 1) / 2 \rceil$.
We summarize a formal argument in the following theorem:
\begin{theorem} \label{thm:2nd_order}
	For any $f \in \Set{A}$, there exists $f_2^{(l)} \in \Set{F}_2$ with coefficients $\theta_0^{(l)}, \theta_1^{(l)}, \theta_2^{(l)} \in \real, l = 1, \cdots, L$ such that $f = f_2^{(L)} \circ \cdots \circ f_2^{(1)}$ where $L \le \lceil (\Capacity-1)/2 \rceil$.
\end{theorem}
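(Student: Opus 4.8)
The plan is to transport the statement through the ring isomorphism $\tau$ of Lemma~\ref{lem:tau}, under which it collapses to the elementary fact that every real polynomial factors into pieces of degree at most two. Since $\Set{A} = \Set{F}_{\Capacity-1}$, Lemma~\ref{lem:tau} applies with $K = \Capacity-1$, so $\tau : \Set{A} \to \real_{\Capacity-1}[x]$ is a ring isomorphism; for the given $f \in \Set{A}$ the polynomial $p(x) := \tau(f)$ has degree $d \le \Capacity - 1$, and $\tau^{-1}$ sends a polynomial $q$ to the filter $q(\Mat{A}(\G{G}))$ and sends polynomial multiplication to composition of filters. (If $\Capacity \le 2$ then $\Set{A} \subseteq \Set{F}_2$ and the claim is immediate, so assume $\Capacity \ge 3$.)

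Next I would factor $p$ over $\real$. By the real form of the fundamental theorem of algebra, $p(x) = c \prod_i (x - r_i) \prod_j (x^2 + b_j x + c_j)$ with $c, r_i, b_j, c_j \in \real$: one real linear factor per real root and one irreducible quadratic per conjugate pair of complex roots. I then regroup these factors greedily into polynomials $q_1, \dots, q_L$ of degree at most two — merge two real linear factors into one quadratic, keep each irreducible quadratic as it stands, fold the scalar $c$ into any one group, and, when $d$ is odd, let one group be the single leftover linear factor. This gives $\prod_{l=1}^{L} q_l(x) = p(x)$ with $L = \lceil d/2 \rceil \le \lceil (\Capacity-1)/2 \rceil$.

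Finally I would set $f_2^{(l)} := \tau^{-1}(q_l) \in \Set{F}_2$; writing $q_l(x) = \theta_2^{(l)} x^2 + \theta_1^{(l)} x + \theta_0^{(l)}$ (allowing $\theta_2^{(l)} = 0$) reads off the required real coefficients. Because $\tau$ is a ring homomorphism, $\tau\big( f_2^{(L)} \circ \cdots \circ f_2^{(1)} \big) = \prod_{l=1}^{L} \tau\big( f_2^{(l)} \big) = \prod_{l=1}^{L} q_l = p = \tau(f)$, and injectivity of $\tau$ gives $f = f_2^{(L)} \circ \cdots \circ f_2^{(1)}$, which is the theorem. The only step worth a sentence of care is that a generic product of $L$ degree-two polynomials would have degree $2L$, possibly exceeding $\Capacity - 1$; but our particular grouping makes the product equal $p$ with $\deg p \le \Capacity - 1$, so the composition never actually leaves $\Set{F}_{\Capacity-1}$ — equivalently, the commuting matrices $q_l(\Mat{A}(\G{G}))$ multiply exactly as the polynomials $q_l$ do, and their product is $p(\Mat{A}(\G{G})) = f$. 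I do not anticipate a real obstacle here: the substance is carried by Lemmas~\ref{lem:dim} and~\ref{lem:tau}, and what remains is real polynomial factorization plus the parity bookkeeping in the odd-degree case.
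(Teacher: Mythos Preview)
Your proposal is correct and follows essentially the same route as the paper's own proof: both push $f$ through the ring isomorphism $\tau$ of Lemma~\ref{lem:tau} to a real polynomial of degree at most $\Capacity-1$, invoke the fundamental theorem of algebra to factor into real pieces of degree $\le 2$, group them so that $L = \lceil d/2\rceil$, and pull back via $\tau^{-1}$. Your treatment is in fact a bit more careful than the paper's (you handle the trivial case $\Capacity\le 2$ and explicitly address why the composite does not escape $\Set{F}_{\Capacity-1}$), but the argument is the same.
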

\begin{proof}
Proving Theorem \ref{thm:2nd_order} requires a fundamental polynomial factorization theorem, rephrased as below:
\begin{lemma}{(Fundamental theorem of algebra)} \label{lem:irreducible_over_real}
	Over the field of reals, the degree of an irreducible non-trivial univariate polynomial is either one or two.
\end{lemma}

For any $f \in \Set{A}$, apply $\tau: \Set{A} \rightarrow \real_{\Capacity-1}[x]$ (cf. Equation \ref{equ:ring_iso}) to map kernel $f$ to polynomial $h(x)$. By Lemma \ref{lem:dim}, $\deg h(x) \le \Capacity-1$.
By Lemma \ref{lem:irreducible_over_real}, factorize $h(x)$ into series of polynomials with the degree at most two, and then merge first-order polynomials into second-order ones until one single or no first-order sub-polynomial remains. As a consequence, $h(x)$ can be written as $h(x) = \prod_{l=1}^{L} h_l(x)$, where $L = \lceil \deg h(x) / 2 \rceil$. If $\deg h(x)$ is even, $\deg h_l(x) = 2$ for every $l = 1, \cdots, D$. Otherwise, except for at most one $h_l(x)$ whose degree is one, all terms have degree two.

The last step is to apply the inverse of morphism $\tau^{-1}: \real_{\Capacity-1}[x] \rightarrow \Set{A}$ formulated as below:
\begin{equation*}
	\tau^{-1}: \sum_{k = 0}^{K} \theta_k x^k \mapsto \sum_{k = 0}^{K} \theta_k \Mat{A}(\G{G})^k.
\end{equation*}
% Recalling Section \ref{sec:lss_framework}, $f^{(l)} \in \Set{B}_2$ for $l = 1, \cdots, \lceil D/2 \rceil$.
Since $\tau^{-1}$ is also a ring isomorphism, we have:
\begin{align*}
    \tau^{-1}(h(x)) &= \tau^{-1}(h_1(x) \cdots h_L(x)) \\
    &= \tau^{-1}(h_1(x)) \circ \cdots \circ \tau^{-1}(h_L(x)) \\
    &= f^{(1)} \circ \cdots \circ f^{(L)}
\end{align*}
where $f^{(l)} \in \Set{F}_{2}, l = 1, \cdots, L$ by definition, which implies $f = \tau^{-1}(\tau(f)) = f^{(L)} \circ \cdots \circ f^{(1)}$.
\end{proof}

Theorem \ref{thm:2nd_order} can be regarded as the universal approximation theorem of linear GCNs. Although nonlinear activation is not considered within our theoretical framework, we make reasonable hypothesis that achieving linear filter completeness can also boost GCNs with nonlinearity \citep{chenWHDL2020gcnii}.
See our experiments in Section \ref{sec:benchmarks_exp}.

Theorem \ref{thm:2nd_order} implies that multi-layer SoGC can implement arbitrary filtering effects and extract features at any positions on the spectrum (Figure \ref{fig:vis_spectrum}). Theorem \ref{thm:2nd_order} also coincides with \citet{dehmamy2019understanding} on how GCNs built on SoGC kernels could utilize depth to raise fitting accuracy (Figure \ref{fig:deeper_on_bandpass}).
% Figure \ref{fig:deeper_on_bandpass} testifies this conclusion by showing that GCNs built on SoGC can successfully utilize depth to raise fitting accuracy on an artifical dataset for filter power testing.

\subsection{Compared with Other Graph Convolution} \label{sec:other_gcs}

In this subsection, we will show that vanilla GCN (and GIN) does not attain full expressiveness in terms of filter representation. We will also contrast our SoGC to multi-hop GCs (i.e., higher-order GCs in our terminology) to further reveal the prominence of SoGC. A brief comparison is summarized in Table \ref{tbl:comparison}.

\paragraph{Vanilla vs. second-order.} \label{sec:vgc_vs_sogc}
Vanilla GCN \citep{kipf2016semi} is a typical one-hop GCN with lumbing of of graph node self-connection and pairwise neighboring connection.
Compared with SoGC, vanilla GC is more localized and computationally cheaper.
However, this design has huge performance limitations \citep{hoang2019revisiting, wu2019simplifying, li2018deeper, oono2019graph, cai2020note}.
We illustrate this issue in terms of filter approximation power based on the LSS framework.

Suppose a GCN stacks $L$ GC layers $f_{1}^{(l)}(G, \Mat{x}) \in \Set{F}_1$
% \footnote{Notice that we use $\Set{F}_0$ to denote the filter space with lumping of self-connection with pairwise neighbor nodes, since the zero-degree ones are too trivial. We assume the renormalization trick can be merged into $\theta$.}
, apply mapping $\tau$ to its spanned LSS, the isomorphic polynomial space is:
\begin{equation} \label{equ:vanilla_lss}
	\Set{F}_1^L \simeq \left\{p : p(x) = \Theta \sum_{l=0}^{L} \binom{l}{L} x^l, \Theta \in \real \right\},
	%	\Set{A}_0 = \left\{\Mat{H} = \theta^{(L)} \cdots \theta^{(1)} ( \Mat{\tA} + \Mat{I} )^L\right\} \cong \left\{p(x) = \theta \sum_{l=0}^{L} \binom{l}{L} x^l, \theta^{(l)} \in \real \right\},
\end{equation}
where $\Theta = \theta^{(L)} \cdots \theta^{(1)}$.
According to Equation \ref{equ:vanilla_lss}, one can see \textit{no matter how large $L$ is or how a optimizer tunes the parameters $\theta^{(l)}$, $\dim \Set{F}_1^L = 1$, which implies $\Set{F}_1^L$ degenerates to a negligible subspace inside $\Set{A}$}.
% The root cause of vanilla GC's deficiency is adding self-loop connection and merging the weights for neighborhoods with central nodes.
GIN \citep{xu2018powerful} disentangles the the weights for neighborhoods and central nodes. We can write this GC layer as ${f}_{GIN}^{(l)}(G, \Mat{x}) = (\theta_1 \Mat{A}(G) + \theta_0 \Mat{I}) \Mat{x}$. The LSS of GIN is isomorphic to the polynomial space:
\begin{equation}
	\Set{F}_{GIN}^L \simeq \left\{ p : p(x) = \prod_{l=1}^{L} \left(\theta_1^{(l)} x + \theta_0^{(l)} \right), \theta_0^{(l)}, \theta_1^{(l)} \in \real \right\}.
\end{equation}
This polynomial space represents all polynomials that can split over the real domain.
However, \textit{ $\Set{F}_{GIN}^L \subsetneq \Set{A}$ since not all polynomials can be factorized into first-order polynomials.}
The expectation number of real roots of a $K$-degree polynomial with zero-mean random coefficients is $\mathbb{E}(N) \sim (2/\pi)\log K$ \citep{maslova1971mean}. When the ambient dimension goes larger, all-real-root polynomials only occupy a small proportion in the ambient space \citep{li2011probability}, which indicates GIN does not have full expressiveness in terms of filter representation either.

\paragraph{Higher-order vs. second-order.} \label{sec:higher_vs_sogc}
Higher-order GCs refer to those polynomial filters with degree larger than three (i.e., $\Set{F}_K, K \ge 3$).
They can model multi-hop GCNs such as \citet{luan2019break, liao2019lanczosnet, abu2019mixhop}.
Compared to SoGCs, higher-order GCs have equivalent expressive power, since they can be reduced to SoGCs.
However, we point out four limitations of adopting higher-order kernels:
1) From our polynomial factorization perspective, fitting graph filters using higher-order GC requires coefficient sparsity, which brings about learning difficulty. \citet{abu2019mixhop} overcomes this problem by adding lasso regularization and extra training procedures. Adopting SoGC can avoid these troubles since decomposition into second-order polynomials results in at most one zero coefficient (See Section \ref{sec:gcn_power}).
2) Eigenvalues of graph adjacency matrices diminish when powered.  This leads to a decreasing numerical rank of $\Mat{A}(G)^k$ and makes aggregating larger-scale information ineffective.
SoGCs can alleviate this problem by preventing higher-order powering operations.
3) Higher-order GC lacks nonlinearity.
SoGCN can bring a better balance between the expressive power of low-level layers and nonlinearity among them.   
4) Multi-hop aggregation consumes higher computational resources (See Table \ref{tbl:comparison}). In contrast, SoGC matches the time complexity of vanilla GCN by fixing the kernel size to two.

\begin{figure}[t!]
	\centering
	\includegraphics[width=0.45\textwidth]{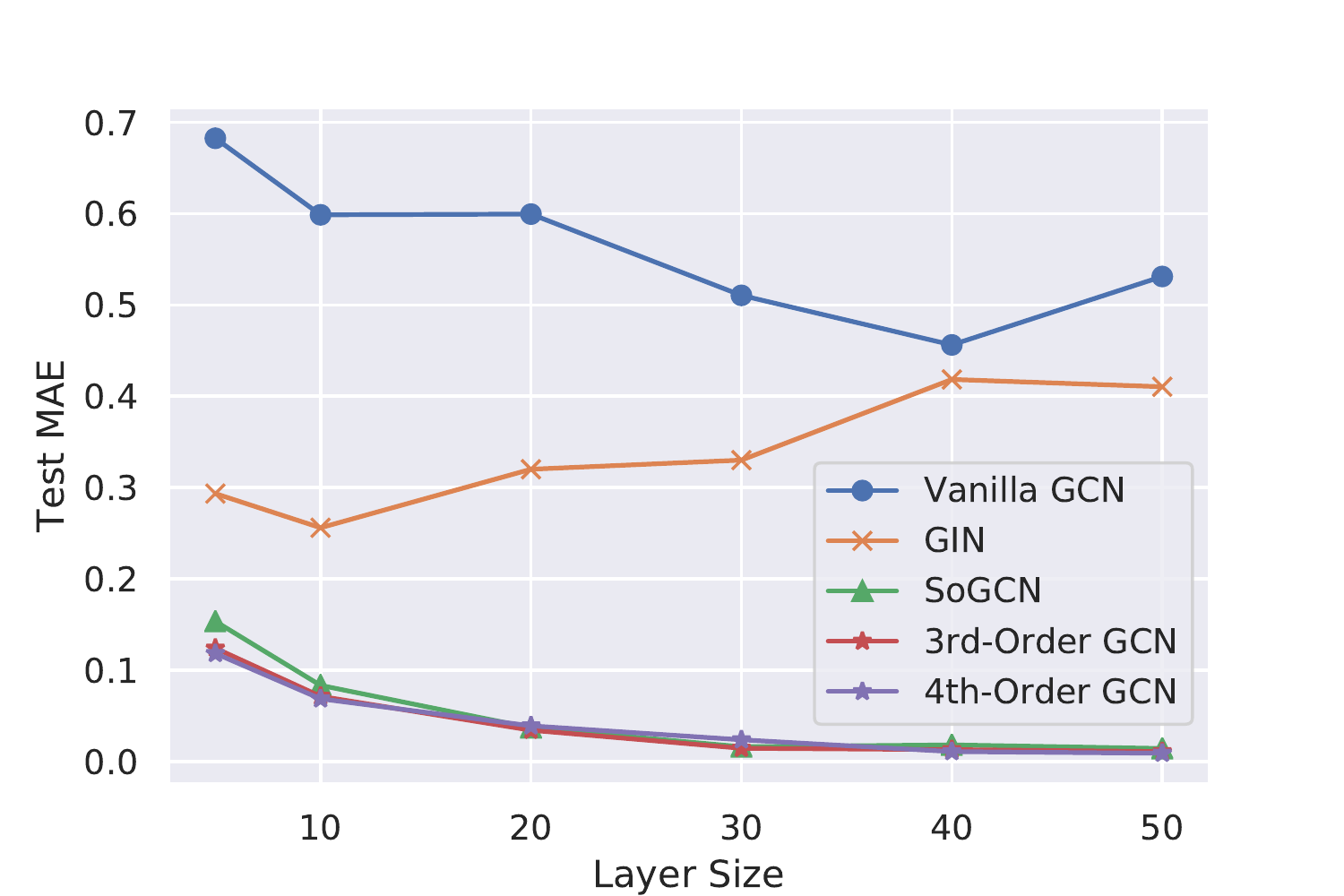}
	\caption{Relations between test MAE and layer size. Experiments are conducted on synthetic \textit{Band-Pass} dataset. Each model has 16 channels per hidden layer with varying layer size.}
	\label{fig:deeper_on_bandpass}
\end{figure}

\subsection{Implementation of Second-Order Graph Convolutional Networks} \label{sec:sogcn}

In this subsection, we introduce other building blocks to establish our Second-Order Graph Convolutional Networks (SoGCN) following the latest trends of GCN.
We promote SoGC to its multi-channel version analogous to \citet{kipf2016semi}.
Then we prepend a feature embedding layer, cascade multiple SoGC layers, and append a readout module.
Suppose the network input is $\Mat{X} \in \real^{N \times D}$ supported in graph $\G{G} \in \Set{G}$, denote the output of $l$-th layer as $\Mat{X}^{(l)} \in \real^{N \times E}$, the final node-level output as $\Mat{Y} \in \real^{N \times F}$, or graph-level output as $\Mat{Y} \in \real^{E}$, we formulate our novel deep GCN built with SoGC (cf. Equation \ref{equ:sogc}) as follows:
\begin{align}
	\label{equ:sogcn_embed} & \Mat{X}^{(0)} = \phi \left( \Mat{X} ; \Mat{\Phi} \right), \\
% 	\label{equ:sogcn_conv} & \Mat{X}^{(l+1)} = \sigma \left( \Mat{X}^{(l)}  \Mat{\Theta}^{(l+1)}_0 + \Mat{A}(G) \Mat{X}^{(l)} \Mat{\Theta}^{(l+1)}_1 \right. \\ \nonumber
    % & \hspace{10.4em} \left. + \Mat{A}(G)^2 \Mat{X}^{(l)} \Mat{\Theta}^{(l+1)}_2 \right), \\
    \label{equ:sogcn_conv} & \Mat{X}^{(l)} = \sigma \left( f_2^{(l)} \left( \Mat{X}^{(l-1)}; \Mat{\Theta}^{(l)} \right) \right), \\
	\label{equ:sogcn_readout} & \Mat{Y} = \psi \left( \Mat{X}^{(L)} ; \Mat{\Psi} \right),
\end{align}
where $\Mat{\Theta}^{(l)} = \{ \Mat{\Theta}_{i}^{(l)} \in \real^{E \times E} \}_{i=0,1,2}$ are trainable weights for linear filters $f_2^{(l)} \in \Set{F}_2$;
$\phi: \real^{N \times D} \rightarrow \real^{N \times E}$ is an equivariant embedder \citep{maron2018invariant} with parameters $\Mat{\Phi}$; $\sigma: \real^{N \times E} \rightarrow \real^{N \times E}$ is an activation function.
For node-level readout, $\psi: \real^{N \times E} \rightarrow \real^{N \times F}$ can be a decoder (with parameters $\Mat{\Psi}$) or an output activation (e.g., softmax) in place of the prior layer.
For graph-level output, $\psi: \real^{N \times E} \rightarrow \real^{E}$ should be an invariant readout function \citep{maron2018invariant}, e.g., channel-wise sum, mean or max.
In practice, we adopt ReLU as nonlinear activation (i.e., $\sigma = \operatorname{ReLU}$), a multi-layer perceptron (MLP) as the embedding function $\phi$, another MLP for node regression readout, and sum \citep{xu2018powerful} for graph classification readout.

We also provide a variant of SoGCN integrated with Gated Recurrent Unit (GRU) \citep{girault2015translation}, termed SoGCN-GRU.
According to \citet{cho2014learning}, GRU can utilize gate mechanism to preserve and forget information. We hypothesize that a GRU can be trained to remove redundant signals and retain lost features on the spectrum. 
Similar to \citet{li2016gated,gilmer2017neural}, we append a shared GRU module after each GC layer, which takes the signal before the GC layer as the hidden state, after the GC layer as the current input.
% Figure \ref{fig:vis_spectrum} and Figure \ref{fig:more_vis_spectrum} verify our conjecture.
We show by our experiment that GRU can facilitate SoGCN in avoiding noises and enhancing features on the spectrum (Figure \ref{fig:vis_spectrum}).
% We observe more steady low-frequency component on the spectrum head and more characteristic bands on the high-frequency tail. 
Our empirical study in Table \ref{tab:res_ablation_study} also indicates the effectiveness of GRU for spectral GCNs is general. Hence, we suggest including this recurrent module as another basic building block of our SoGCNs.

\begin{table}[t!]
    \begin{center}
        \caption{The performance of graph node signal regression with \textit{High-Pass}, \textit{Low-Pass}, and \textit{Band-Pass} filters (over graph spectral space) as learning target. Each model has 16 GC layers and 16 channels of hidden layers.}
        \vspace{.08in}
        \label{tab:res_sgs}
        \footnotesize
        \renewcommand{\arraystretch}{1.2}
        \scalebox{0.95}{
        \begin{tabular}{l|c|c|c|c}
            \hline
            \multirow{2}{*}{\textbf{Model}} & \multirow{2}{*}{\textbf{\#Param}} & \multicolumn{3}{c}{Test MAE} \\ \cline{3-5}
            & & \textbf{High-Pass} & \textbf{Low-Pass} & \textbf{Band-Pass} \\
            \hline
            Vanilla & 4611 & 0.308 & 0.317 & 0.559 \\
            GIN & 4627 & 0.344 & 0.096 & 0.274 \\
            SoGCN & 12323 & 0.021 & 0.023 & 0.050 \\
            3rd-Order & 16179 & 0.021 & 0.022 & 0.045 \\
            4th-Order & 20035 & 0.021 & 0.022 & 0.049 \\
            \hline
        \end{tabular}
        }
    \end{center}
\end{table}

\makesavenoteenv{tabular}
\makesavenoteenv{table}
\begin{table*}[t!]
    \begin{center}
        \caption{Results and comparison with other GNN models on ZINC, CIFAR10, MNIST, CLUSTER and PATTERN datasets. For ZINC dataset, the parameter budget is set to 500k. For CIFAR10, MNIST, CLUSTER and PATTERN datasets, the parameter budget is set to 100k. \CBest{Red}: the best model, \CBetter{Green}: good models.}
        \label{tab:res_benchmarks}
        % \fontsize{8.2pt}{10.5pt}\selectfont
        \renewcommand{\arraystretch}{1.2}
        \begin{tabular}{l|}
            \hline
            \multirow{2}{*}{\textbf{Model}} \\
            \\
            \hline
            Vanilla GCN \\
            Vanilla GCN + GRU \\
            GAT \\
            MoNet \\
            GraphSage \\
            GIN \\
            GatedGCN \\
            3WLGNN \\
            \hline
            SoGCN \\
            SoGCN-GRU \\
            \hline
        \end{tabular}
        \begin{tabular}{|c|}
            \hline
            Test MAE $\pm$ s.d. \\ \cline{1-1}
            \textbf{ZINC} \\
            \hline
            0.367$\pm$0.011 \\
            0.295$\pm$0.005 \\
            0.384$\pm$0.007 \\
            0.292$\pm$0.006 \\
            0.398$\pm$0.002 \\
            0.387$\pm$0.015 \\
            0.350$\pm$0.020 \\
            0.407$\pm$0.028 \tablefootnote{~This is the result of 3WLGNN  with 100k parameters. The test MAE of 3WLGNN  with 500k parameters is increased to 0.427$\pm$0.011.} \\
            \hline
            \CBetter{0.238$\pm$0.017} \\
            \CBest{0.201$\pm$0.006} \\
            \hline
        \end{tabular}
        \begin{tabular}{|c|c|c|c}
            \hline
            \multicolumn{4}{|c}{Test ACC $\pm$ s.d. (\%)} \\ \cline{1-4}
            \textbf{MNIST} & \textbf{CIFAR10} & \textbf{CLUSTER} & \textbf{PATTERN} \\
            \hline
            90.705$\pm$0.218 & 55.710$\pm$0.381 & 53.445$\pm$2.029 & 63.880$\pm$0.074 \\
            96.020$\pm$0.090 & 61.332$\pm$0.849 & 57.932$\pm$0.168 & 70.194$\pm$0.216 \\
            95.535$\pm$0.205 & 64.223$\pm$0.455 & 57.732$\pm$0.323 & 75.824$\pm$1.823 \\
            90.805$\pm$0.032 & 65.911$\pm$2.515 & 58.064$\pm$0.131 & 85.482$\pm$0.037 \\
            \CBetter{97.312$\pm$0.097} & 65.767$\pm$0.308 & 50.454$\pm$0.145 & 50.516$\pm$0.001 \\
            96.485$\pm$0.252 & 55.255$\pm$1.527 & 58.384$\pm$0.236 & \CBetter{85.590$\pm$0.011} \\
            \CBetter{97.340$\pm$0.143} & \CBetter{67.312$\pm$0.311} & 60.404$\pm$0.419 & 84.480$\pm$0.122 \\
            95.075$\pm$0.961 & 59.175$\pm$1.593 & 57.130$\pm$6.539 & \CBetter{85.661$\pm$0.353} \\
            \hline
            \CBetter{96.785$\pm$0.113} & \CBetter{66.338$\pm$0.155} & \CBest{68.167$\pm$1.164} & \CBest{85.735$\pm$0.037} \\
            \CBest{97.729$\pm$0.159} & \CBest{68.208$\pm$0.271} & \CBetter{67.994$\pm$2.619} & \CBetter{85.711$\pm$0.047} \\
            \hline
        \end{tabular}
    \end{center}
\end{table*}

\section{Experiments} \label{sec:experiments}

\subsection{Synthetic Graph Spectrum Dataset for Filter Fitting Power Testing} \label{sec:synthetic_exp}

\jshi{To validate the expressiveness of SoGCN, and its power to fit arbitrary graph filters, we build a Synthetic Graph Spectrum (SGS) dataset for the node signal filtering regression task. We construct SGS dataset with random graphs. The learning task is to simulate three types of hand-crafted filtering functions: high-pass, low-pass, and band-pass on the graph spectrum (defined over the graph eigenvectors). There are 1k training graphs, 1k validation graphs, and 2k testing graphs for each filtering function.  Each graph is undirected and comprises 80 to 120 nodes. Appendix \ref{supp:sgs_dataset} covers more details of our SGS dataset.  We choose Mean Absolute Error (MAE) as evaluation metric.
}

\paragraph{Experimental Setup.}

We compare SoGCN with vanilla GCN \citep{kipf2016semi}, GIN \citep{xu2018powerful}, and higher-order GCNs on the synthetic dataset. 
To evaluate each model's expressiveness purely on the GC kernel design, we remove ReLU activations for all tested models. 
We adopt the Adam optimizer \citep{kingma2015adam} in our training process, with a batch size of 128. The learning rate begins with 0.01 and decays by half once the validation loss stagnates for more than 10 training epochs.

\paragraph{Results and Discussion.}

\jshi{
Table~\ref{tab:res_sgs} summarizes the quantitative comparisons.
SoGCN achieves the superior performance on all of the 3 tasks outperforming vanilla GCN and GIN, which implies that SoGC graph convolutional kernel does benefit from explicit disentangling of the  second-hop neighborhoods.  Our results also show that higher-order (3rd-order and 4th-order) GCNs do not improve the performance further, even though they incorporate much more parameters.  SoGCN is more expressive and does a better trade-off between performance and model size.
}

% \begin{figure*}[t]
% 	\begin{subfigure}[b]{0.5\textwidth}
% 		\centering
% 		\includegraphics[width=\textwidth]{}
% 		\caption{Relations between test MAE and depth.}
% 		\label{fig:deeper_on_bandpass}
%     \end{subfigure}
%     \hfill
%     \begin{subfigure}[b]{0.5\textwidth}
% 		\centering
% 		\includegraphics[width=\textwidth]{}
% 		\caption{Relations between test MAE and width.}
% 		\label{fig:wider_on_bandpass}
% 	\end{subfigure}
	
% 	\caption{Relations between test MAE and depth or width. Experiments of both (a) and (b) are conducted on synthetic \textit{Band-Pass} dataset. Each model in (a) has 16 channels per hidden layer with varying depth. Each model in (b) consists of 16 GC layers with varying width.}
% 	\label{fig:er_on_bandpass}
% \end{figure*}

Figure~\ref{fig:deeper_on_bandpass} plots MAE results as we vary the depth of GC layers for each graph kernel type. Vanilla GCN and GIN can not benefit from depth while SoGC and higher-order GCs can leverage depth to span larger LSS, contributing to the remarkable filtering results. 
SoGC and higher-order GCs have very close performance after increasing the layer size, which suggests higher-order GCs do not obtain more expressiveness than SoGC. 

\begin{table}[h!]
    \begin{center}
        \caption{The performance of node-level multi-label classification on ogb-protein dataset. We compare each model condering the following dimensions: the number of parameters, training time (in seconds) per epoch (ep.), and final test ROC-AUC (\%).}
        \vspace{.08in}
        \label{tab:res_ogbprotein}
        \footnotesize
        \renewcommand{\arraystretch}{1.2}
        \scalebox{0.95}{
        \begin{tabular}{l|c|c|c}
            \hline
            \multirow{2}{*}{\textbf{Model}} & \multicolumn{3}{c}{\textbf{ogb-protein}} \\ \cline{2-4}
            & \#Param & Time / Ep. & ROC-AUC $\pm$ s.d. \\
            \hline
            Vanilla GCN & 96880 & 3.47 $\pm$ 0.40 & 72.16 $\pm$ 0.55 \\
            GIN & 128512 & 4.33 $\pm$ 0.27 & 76.77 $\pm$ 0.20 \\
            GCNII & 227696 & 4.96 $\pm$ 0.29 & 74.79 $\pm$ 1.17 \\
            GCNII* & 424304 & 5.09 $\pm$ 0.17 & 72.50 $\pm$ 2.49 \\
            APPNP & 96880 & 6.56 $\pm$ 0.37 & 65.37 $\pm$ 1.15 \\
            GraphSage & 193136 & 6.51 $\pm$ 0.13 & 77.53 $\pm$ 0.30 \\
            \hline
            SoGCN & 192512 & 4.88 $\pm$ 0.36 & \textbf{79.28 $\pm$ 0.47} \\
            4th-Order GCN & 320512 & 8.89 $\pm$ 0.82 & 78.95 $\pm$ 0.57 \\
            6th-Order GCN & 448512 & 9.76 $\pm$ 0.64 & 78.61 $\pm$ 0.42 \\
            \hline
        \end{tabular}
        }
    \end{center}
\end{table}

\begin{table*}[t!]
\begin{center}
    \caption{Results of ablation study on ZINC, MNIST and CIFAR10 datasets. Vanilla GCN is the comparison baseline and the number in the ($\uparrow~\boldsymbol{\cdot}$) and ($\downarrow~\boldsymbol{\cdot}$) represents the performance gain compared with the baseline.}
    \label{tab:res_ablation_study}
    % \fontsize{8.5pt}{10.5pt}\selectfont
    \begin{tabular}{l|l|l|l}
        \hline
        \multirow{2}{*}{\textbf{Model}} & \multicolumn{1}{c|}{Test MAE $\pm$ s.d.} & \multicolumn{2}{c}{Test ACC $\pm$ s.d. (\%)} \\
        \cline{2-4} & \multicolumn{1}{c|}{\textbf{ZINC}} & \multicolumn{1}{c|}{\textbf{MNIST}} & \multicolumn{1}{c}{\textbf{CIFAR10}} \\
        \hline
        Vanilla GCN & $0.367 \pm 0.011$~(Baseline) & $90.705 \pm 0.218$~(Baseline)  & $55.710 \pm 0.381$~(Baseline) \\
        % 1st-Order GCN & $0.253\pm0.012$~($\downarrow$~0.113) & $96.407\pm0.089$~($\uparrow$~5.701) & $64.993\pm0.092$~($\uparrow$~9.283) \\
        SoGCN & $0.238\pm0.017$~($\downarrow$~\textbf{0.129}) & $96.785\pm0.113$~($\uparrow$~\textbf{6.080}) & $66.338\pm0.155$~($\uparrow$~\textbf{10.628}) \\
        % 3rd-Order GCN & $0.242\pm0.005$~($\downarrow$~0.125) & $96.367\pm0.227$~($\uparrow$~5.662) & $64.267\pm0.182$~($\uparrow$~8.557) \\
        4th-Order GCN & $0.243\pm0.009$~($\downarrow$~0.124) &$96.167\pm0.198$~($\uparrow$~5.462) & $64.230\pm0.212$~($\uparrow$~8.520)\\
        6th-Order GCN & $0.261\pm 0.014$~($\downarrow$~0.106) & $96.292\pm 0.134$~($\uparrow$~5.587) & $63.687\pm 0.151$~($\uparrow$~7.977) \\
        \hline
        Vanilla GCN + GRU & $0.295 \pm 0.005~(\downarrow0.072)$ & $96.020 \pm 0.090~(\uparrow5.315)$ & $61.332 \pm 0.381~(\uparrow5.622)$\\
        % 1st-Order GCN + GRU & $0.226\pm0.015~(\downarrow0.141)$ & $96.945\pm0.093~(\uparrow6.240)$ & $62.372\pm0.522~(\uparrow6.662)$ \\
        SoGCN + GRU & $0.201\pm0.006$~($\downarrow$~\textbf{0.166}) & $97.729\pm0.159$~($\uparrow$~\textbf{7.024}) & $68.208\pm0.271$~($\uparrow$~\textbf{12.498}) \\
        % 3rd-Order GCN + GRU & $0.203\pm0.001~(\downarrow0.164)$ & $97.375\pm0.052 ~(\uparrow6.670)$ & $64.242\pm0.511~(\uparrow8.532)$ \\
        4th-Order GCN + GRU & $0.204\pm0.004~(\downarrow0.163)$ & $97.304\pm0.296~(\uparrow6.599)$ & $64.697\pm0.341~(\uparrow8.987)$ \\
        6th-Order GCN + GRU & $0.218\pm 0.005~(\downarrow 0.149)$ & $97.325\pm 0.218~(\uparrow 6.620)$ & $64.523\pm 0.251~(\uparrow 8.813)$  \\
        \hline
    \end{tabular}
\end{center}
\end{table*}

\subsection{OGB Benchmarks} \label{sec:ogb_exp}

We choose Open Graph Benchmark (OGB) \citep{hu2020ogb} to compare our SoGC with other GCNs in terms of the parameter numbers, train time per epoch, and test ROC-AUC. We only demonstrate the results for predicting presence of protein functions (multi-label graph classification). We refer interested reader to Appendix \ref{supp:more_expr} for more results on OGB.

\paragraph{Experiment Setup} The chosen models mainly include spectral-domain models: vanilla GCN, GIN, APPNP \citep{klicpera2018predict}, GCNII \citep{chenWHDL2020gcnii}, our SoGCN, and two high-order GCNs. We also obtain the performance of GraphSage, the vertex-domain GNN baseline, for a reference. We build GIN, GCNII, GraphSage, and APPNP based on official implementations in PyTorch Geometric \citep{Fey/Lenssen/2019}.
Every model consists of three GC layers, and the same node embedder and readout modules.
We borrow the method of \citet{dwivedi2020benchmarking} to compute the number of paramters.
We run an exclusive training program on an Nvidia Quadro P6000 GPU to test the training time per epoch.
We follow the same training and evaluation procedures on the OGB benchmarks to ensure fair comparisons. We train each model until convergence (\textasciitilde1k epochs for vanilla GCN, GIN, GraphSage, and \textasciitilde3k epochs for SoGCN, higher-order GCNs, APPNP, GCNII).

\paragraph{Results and Discussion.} Table \ref{tab:res_ogbprotein} demonstrates the ROC-AUC score for each model on ogb-protein dataset. Our SoGCN achieves the best performance among all presented GCNs but its parameter number and time complexity is only slightly higher than GIN (consistent with Table \ref{tbl:comparison}).
SoGC is more expressive than other existing graph filters (such as APPNP and GCNII), and also outperforms message-passing GNN baseline GraphSage.
Compared with higher-order (4th and 6th) GCNs, the ROC-AUC score of SoGCN surpasses all of them while reducing model complexity significantly. 

\subsection{GNN Benchmarks}
\label{sec:benchmarks_exp}

%Recent real-world GNN benchmarks proposed by \cite{dwivedi2020benchmarking} includes a range of datasets across various graph-learning tasks, which offers reliable and consistent comparisons among general and state-of-the-art GNN models on diverse real-world graph-learning tasks. We choose to test our \ourgcn{} on a chemistry dataset (ZINC molecules) for graph regression task and 2 computer vision datasets (CIFAR10 and MNIST superpixels) for graph classification task.

% \jshi{We follow the benchmarks outlined in ~\cite{dwivedi2020benchmarking} for evaluating GNNs on many datasets across a variety of artificial and real-world tasks.
% We choose to evaluate our SoGCN on a chemistry dataset (ZINC molecules) for the graph regression task and two computer vision datasets (CIFAR10 and MNIST superpixels) for the graph classification task. }

We follow the benchmarks outlined in ~\citet{dwivedi2020benchmarking} for evaluating GNNs on several datasets across a variety of artificial and real-world tasks.
We choose to evaluate our SoGCN on a real-world chemistry dataset (ZINC molecules) for the graph regression task, two semi-artificial computer vision datasets (CIFAR10 and MNIST superpixels) for the graph classification task, and two artificial social network datasets (CLUSTER and PATTERN) for node classification.

% The recent work ~\cite{dwivedi2020benchmarking} has proposed a benchmark for evaluating GNNs on many datasets across different artificial and real-world tasks.
% We choose to evaluate \ourgcn{} on a chemistry dataset (ZINC molecules) for graph regression task and 2 computer vision datasets (CIFAR10 and MNIST superpixels) for graph classification task. 

\paragraph{Experimental Setup.} 

% \jshi{
% We created a variant of our model, called SoGCN-GRU, which replaces the ReLU activations with $\operatorname{GRU} \circ \operatorname{ReLU}$.  We compare our proposed SoGCN and SoGCN-GRU with state-of-the-art GNNs: vanilla GCN \citep{kipf2016semi}, GAT \citep{velivckovic2017graph}, MoNet \citep{monti2017geometric}, GIN \citep{xu2018powerful}, GraphSage \citep{hamilton2017inductive}, GatedGCN (\cite{bresson2017residual}) and 3WL-GNN \citep{maron2019provably}. To ensure fair comparisons, we follow the same training and evaluation pipelines (including optimizer settings) and data splits of benchmarks.   Furthermore, we adjusted our model depth to ensure it satisfies parameter budgets as specified in the benchmark. 
% Note that we do not use any geometrical information to encode rich graph edge relationship, as in models such as GatedGCN-E-PE.  We use only binary graph connectivity information.
% }

% We created a variant of our model, called SoGCN-GRU, which replaces the ReLU activations with $\operatorname{GRU} \circ \operatorname{ReLU}$.
We compare our proposed SoGCN and SoGCN-GRU with state-of-the-art GNNs: vanilla GCN, GIN, GraphSage, GAT \citep{velivckovic2017graph}, MoNet \citep{monti2017geometric}, GatedGCN \citep{bresson2017residual} and 3WL-GNN \citep{maron2019provably}. To ensure fair comparisons, we follow the same training and evaluation pipelines (including optimizer settings) and data splits of benchmarks.   Furthermore, we adjust our model's depth and width to ensure it satisfies parameter budgets as specified in the benchmark. 
Note that we do not use any geometrical information to encode rich graph edge relationship, as in models such as GatedGCN-E-PE.  We only employ  graph connectivity information for all tested models.

\paragraph{Results and Discussion.}
% \jshi{
% Table~\ref{tab:res_benchmarks} reports the benchmark results. Our model SoGCN makes small computational changes to GCN by adopting 2-hop and 0-hop neighborhoods, and it outperforms models with complicated graph edge encoding mechanisms.  With GRU nonlinear activation, SoGCN-GRU tops almost all state-of-the-art GNNs on the three datasets we tested.  Furthermore, we visualize the network activation on the graph spectrum in Figure~\ref{fig:vis_spectrum} and Figure~\ref{fig:more_vis_spectrum} to illustrate SoGCN's and GRU's impacts on preventing over-smoothing. 
% }

Table~\ref{tab:res_benchmarks} reports the benchmark results. Our model SoGCN makes small computational changes to GCN by adopting second-hop neighborhood, and it outperforms models with complicated message-passing mechanisms, such as GAT and GraphSage.  With GRU module, SoGCN-GRU tops almost all state-of-the-art GNNs on the ZINC, MNIST and CIFAR10 datasets.
In Figure \ref{fig:vis_spectrum}, we visualize a spectrum of the last layer's feature activation on ZINC dataset. One can see our SoGC can extract features on high-frequency bands and GRU can further sharpen these patterns. 
However, GRU does not lift accuracy on CLUSTER and PATTERN datasets for node classification task. According to \citet{li2018deeper}, that GRU suppresses low-frequency band results in the slight performance drop on the CLUSTER and PATTERN datasets.
% However, GRU does not lift accuracy on CLUSTER and PATTERN datasets for node classification task. As suggested by \citet{li2018deeper}, graph node classification benefits from low-frequency features. That GRU suppresses low-frequency band will result in a slight performance drop on the CLUSTER and PATTERN datasets.
% \CRemove{Furthermore, we visualize the network activation on the graph spectrum in Figure~\ref{fig:vis_spectrum} and Figure~\ref{fig:more_vis_spectrum}, which illustrate SoGCN's and GRU's impacts on preventing over-smoothing.}

\paragraph{Ablation Study.}

To contrast the performance gain produced by different aggregation ranges and GRU on the benchmarks, we evaluate vanilla GCN, SoGCN, 4th-Order GCN, 6th-Order GCN as well as their GRU variants on the ZINC, MNIST and CIFAR10 datasets. Table~\ref{tab:res_ablation_study} presents the results of our ablation study, which are consistent with our observation on Section \ref{sec:synthetic_exp} and \ref{sec:ogb_exp}. As shown by our ablation study, adopting the second-hop aggregation makes huge performance gain (vanilla GCN vs. SoGCN). However, high-order GCNs are not capable of boosting the performance further over SoGCN. On the contrary, higher-order GCs can even lead to the performance drop (4th-Order GCN vs. 6th-Order GCN vs. SoGCN). We also testify GRU's effectiveness for each presented model. But the gain brought by GRU is not as large as adding second-hop aggregation. Figure \ref{fig:vis_spectrum} shows our SoGC can extract patterns on the spectrum alone. GRU plays a role of enhancing the features.
% On the CIFAR10 dataset, GRU fails to improve performance for 3rd-Order GCN.

\section{Conclusion} \label{sec:conclusion}
% \jshi{What is the most localized graph convolution kernel (GC) with full expressiveness for GCNs?
% To answer this question, we generalize the filter space to a finite graph set and establish our LSS framework to assess GC layers functioning on different hops.   We show our second-order model, called SoGC, possesses full representation power than one-hop GCs. Thus, it is the most localized GC that we adopt as the basic building block to establish our SoGCN.
% Both synthetic and benchmark experiments exhibit the prominence of our theoretic design.
% }

What should be the basic convolutional blocks for GCNs? To answer this, we seek the most localized graph convolution kernel (GC) with full expressiveness.  We establish our LSS framework to assess GC layers of different aggregation ranges.   We show the second-order graph convolutional filter, termed SoGC, possesses the full representation power than one-hop GCs. Hence, it becomes the efficient and simplest GC building blocks that we adopt to establish our SoGCN. Both synthetic and benchmark experiments exhibit the prominence of our theoretic design.
We also make an empirical study on the GRU's effects in spectral GCNs.
Interesting directions for future work include analyzing two-hop aggregation schemes with message-passing GNNs and proving the universality of nonlinear GCNs.

% \section*{Software and Data}

% If a paper is accepted, we strongly encourage the publication of software and data with the
% camera-ready version of the paper whenever appropriate. This can be
% done by including a URL in the camera-ready copy. However, \textbf{do not}
% include URLs that reveal your institution or identity in your
% submission for review. Instead, provide an anonymous URL or upload
% the material as ``Supplementary Material'' into the CMT reviewing
% system. Note that reviewers are not required to look at this material
% when writing their review.

% % Acknowledgements should only appear in the accepted version.
% \section*{Acknowledgements}

% \textbf{Do not} include acknowledgements in the initial version of
% the paper submitted for blind review.

% If a paper is accepted, the final camera-ready version can (and
% probably should) include acknowledgements. In this case, please
% place such acknowledgements in an unnumbered section at the
% end of the paper. Typically, this will include thanks to reviewers
% who gave useful comments, to colleagues who contributed to the ideas,
% and to funding agencies and corporate sponsors that provided financial
% support.

% In the unusual situation where you want a paper to appear in the
% references without citing it in the main text, use \nocite
% \nocite{langley00}

\bibliography{references}

\begin{thebibliography}{45}
\providecommand{\natexlab}[1]{#1}
\providecommand{\url}[1]{\texttt{#1}}
\expandafter\ifx\csname urlstyle\endcsname\relax
  \providecommand{\doi}[1]{doi: #1}\else
  \providecommand{\doi}{doi: \begingroup \urlstyle{rm}\Url}\fi

\bibitem[Abu-El-Haija et~al.(2019)Abu-El-Haija, Perozzi, Kapoor, Alipourfard,
  Lerman, Harutyunyan, Steeg, and Galstyan]{abu2019mixhop}
Abu-El-Haija, S., Perozzi, B., Kapoor, A., Alipourfard, N., Lerman, K.,
  Harutyunyan, H., Steeg, G.~V., and Galstyan, A.
\newblock Mixhop: Higher-order graph convolutional architectures via sparsified
  neighborhood mixing.
\newblock In \emph{ICML}, 2019.

\bibitem[Bianchi et~al.(2019)Bianchi, Grattarola, Livi, and
  Alippi]{bianchi2019graph}
Bianchi, F.~M., Grattarola, D., Livi, L., and Alippi, C.
\newblock Graph neural networks with convolutional arma filters.
\newblock In \emph{CoRR}, 2019.

\bibitem[Bresson \& Laurent(2017)Bresson and Laurent]{bresson2017residual}
Bresson, X. and Laurent, T.
\newblock Residual gated graph convnets.
\newblock \emph{arXiv:1711.07553}, 2017.

\bibitem[Bruna et~al.(2014)Bruna, Zaremba, Szlam, and LeCun]{bruna2013spectral}
Bruna, J., Zaremba, W., Szlam, A., and LeCun, Y.
\newblock Spectral networks and locally connected networks on graphs.
\newblock In \emph{ICLR}, 2014.

\bibitem[Cai \& Wang(2020)Cai and Wang]{cai2020note}
Cai, C. and Wang, Y.
\newblock A note on over-smoothing for graph neural networks.
\newblock In \emph{ICML}, 2020.

\bibitem[Cho et~al.(2014)Cho, Van~Merri{\"e}nboer, Gulcehre, Bahdanau,
  Bougares, Schwenk, and Bengio]{cho2014learning}
Cho, K., Van~Merri{\"e}nboer, B., Gulcehre, C., Bahdanau, D., Bougares, F.,
  Schwenk, H., and Bengio, Y.
\newblock Learning phrase representations using rnn encoder-decoder for
  statistical machine translation.
\newblock \emph{arXiv:1406.1078}, 2014.

\bibitem[Chung \& Graham(1997)Chung and Graham]{chung1997spectral}
Chung, F.~R. and Graham, F.~C.
\newblock \emph{Spectral graph theory}.
\newblock American Mathematical Soc., 1997.

\bibitem[Defferrard et~al.(2016)Defferrard, Bresson, and
  Vandergheynst]{defferrard2016convolutional}
Defferrard, M., Bresson, X., and Vandergheynst, P.
\newblock Convolutional neural networks on graphs with fast localized spectral
  filtering.
\newblock In \emph{NeurIPS}, 2016.

\bibitem[Dehmamy et~al.(2019)Dehmamy, Barab{\'a}si, and
  Yu]{dehmamy2019understanding}
Dehmamy, N., Barab{\'a}si, A.-L., and Yu, R.
\newblock Understanding the representation power of graph neural networks in
  learning graph topology.
\newblock In \emph{NeurIPS}, 2019.

\bibitem[Dwivedi et~al.(2020)Dwivedi, Joshi, Laurent, Bengio, and
  Bresson]{dwivedi2020benchmarking}
Dwivedi, V.~P., Joshi, C.~K., Laurent, T., Bengio, Y., and Bresson, X.
\newblock Benchmarking graph neural networks.
\newblock \emph{arXiv:2003.00982}, 2020.

\bibitem[Fey \& Lenssen(2019)Fey and Lenssen]{Fey/Lenssen/2019}
Fey, M. and Lenssen, J.~E.
\newblock Fast graph representation learning with {PyTorch Geometric}.
\newblock In \emph{ICLR Workshop on Representation Learning on Graphs and
  Manifolds}, 2019.

\bibitem[Fey et~al.(2018)Fey, Eric~Lenssen, Weichert, and
  M{\"u}ller]{fey2018splinecnn}
Fey, M., Eric~Lenssen, J., Weichert, F., and M{\"u}ller, H.
\newblock Splinecnn: Fast geometric deep learning with continuous b-spline
  kernels.
\newblock In \emph{CVPR}, 2018.

\bibitem[Gilmer et~al.(2017)Gilmer, Schoenholz, Riley, Vinyals, and
  Dahl]{gilmer2017neural}
Gilmer, J., Schoenholz, S.~S., Riley, P.~F., Vinyals, O., and Dahl, G.~E.
\newblock Neural message passing for quantum chemistry.
\newblock In \emph{ICML}, 2017.

\bibitem[Girault et~al.(2015)Girault, Gon{\c{c}}alves, and
  Fleury]{girault2015translation}
Girault, B., Gon{\c{c}}alves, P., and Fleury, {\'E}.
\newblock Translation on graphs: An isometric shift operator.
\newblock \emph{SPL}, 2015.

\bibitem[Hamilton et~al.(2017)Hamilton, Ying, and
  Leskovec]{hamilton2017inductive}
Hamilton, W., Ying, Z., and Leskovec, J.
\newblock Inductive representation learning on large graphs.
\newblock In \emph{NeurIPS}, 2017.

\bibitem[Hammond et~al.(2011)Hammond, Vandergheynst, and
  Gribonval]{hammond2011wavelets}
Hammond, D.~K., Vandergheynst, P., and Gribonval, R.
\newblock Wavelets on graphs via spectral graph theory.
\newblock \emph{ACHA}, 2011.

\bibitem[Hoang \& Maehara(2019)Hoang and Maehara]{hoang2019revisiting}
Hoang, N. and Maehara, T.
\newblock Revisiting graph neural networks: All we have is low-pass filters.
\newblock \emph{arXiv:1905.09550}, 2019.

\bibitem[Hu et~al.(2020)Hu, Fey, Zitnik, Dong, Ren, Liu, Catasta, and
  Leskovec]{hu2020ogb}
Hu, W., Fey, M., Zitnik, M., Dong, Y., Ren, H., Liu, B., Catasta, M., and
  Leskovec, J.
\newblock Open graph benchmark: Datasets for machine learning on graphs.
\newblock \emph{arXiv:2005.00687}, 2020.

\bibitem[Ibragimov \& Maslova(1971)Ibragimov and Maslova]{maslova1971mean}
Ibragimov, I.~A. and Maslova, N.~B.
\newblock The mean number of real zeros of random polynomials. i. coefficients
  with zero mean.
\newblock \emph{Theory of Probability \& Its Applications}, 16:\penalty0
  228--248, 1971.

\bibitem[Kingma \& Ba(2015)Kingma and Ba]{kingma2015adam}
Kingma, D.~P. and Ba, J.
\newblock Adam: A method for stochastic optimization.
\newblock In \emph{ICLR}, 2015.

\bibitem[Kipf \& Welling(2017)Kipf and Welling]{kipf2016semi}
Kipf, T.~N. and Welling, M.
\newblock Semi-supervised classification with graph convolutional networks.
\newblock In \emph{ICLR}, 2017.

\bibitem[Klicpera et~al.(2019)Klicpera, Bojchevski, and
  G{\"u}nnemann]{klicpera2018predict}
Klicpera, J., Bojchevski, A., and G{\"u}nnemann, S.
\newblock Predict then propagate: Graph neural networks meet personalized
  pagerank.
\newblock In \emph{ICLR}, 2019.

\bibitem[LeCun et~al.(2015)LeCun, Bengio, and Hinton]{lecun2015deep}
LeCun, Y., Bengio, Y., and Hinton, G.
\newblock Deep learning.
\newblock \emph{Nature}, 2015.

\bibitem[Li et~al.(2018)Li, Han, and Wu]{li2018deeper}
Li, Q., Han, Z., and Wu, X.-M.
\newblock Deeper insights into graph convolutional networks for semi-supervised
  learning.
\newblock In \emph{AAAI}, 2018.

\bibitem[Li(2011)]{li2011probability}
Li, W.
\newblock Probability of all real zeros for random polynomial with the
  exponential ensemble.
\newblock \emph{Preprint}, 2011.

\bibitem[Li et~al.(2016)Li, Tarlow, Brockschmidt, and Zemel]{li2016gated}
Li, Y., Tarlow, D., Brockschmidt, M., and Zemel, R.
\newblock Gated graph sequence neural networks.
\newblock In \emph{ICLR}, 2016.

\bibitem[Liao et~al.(2019)Liao, Zhao, Urtasun, and Zemel]{liao2019lanczosnet}
Liao, R., Zhao, Z., Urtasun, R., and Zemel, R.~S.
\newblock Lanczosnet: Multi-scale deep graph convolutional networks.
\newblock In \emph{ICLR}, 2019.

\bibitem[Luan et~al.(2019)Luan, Zhao, Chang, and Precup]{luan2019break}
Luan, S., Zhao, M., Chang, X.-W., and Precup, D.
\newblock Break the ceiling: Stronger multi-scale deep graph convolutional
  networks.
\newblock In \emph{NeurIPS}, 2019.

\bibitem[Maron et~al.(2018)Maron, Ben-Hamu, Shamir, and
  Lipman]{maron2018invariant}
Maron, H., Ben-Hamu, H., Shamir, N., and Lipman, Y.
\newblock Invariant and equivariant graph networks.
\newblock In \emph{ICLR}, 2018.

\bibitem[Maron et~al.(2019)Maron, Ben-Hamu, Serviansky, and
  Lipman]{maron2019provably}
Maron, H., Ben-Hamu, H., Serviansky, H., and Lipman, Y.
\newblock Provably powerful graph networks.
\newblock In \emph{NeurIPS}, 2019.

\bibitem[Ming~Chen et~al.(2020)Ming~Chen, Zengfeng~Huang, and
  Li]{chenWHDL2020gcnii}
Ming~Chen, Z.~W., Zengfeng~Huang, B.~D., and Li, Y.
\newblock Simple and deep graph convolutional networks.
\newblock In \emph{ICML}, 2020.

\bibitem[Monti et~al.(2017)Monti, Boscaini, Masci, Rodola, Svoboda, and
  Bronstein]{monti2017geometric}
Monti, F., Boscaini, D., Masci, J., Rodola, E., Svoboda, J., and Bronstein,
  M.~M.
\newblock Geometric deep learning on graphs and manifolds using mixture model
  cnns.
\newblock In \emph{CVPR}, 2017.

\bibitem[Morris et~al.(2019)Morris, Ritzert, Fey, Hamilton, Lenssen, Rattan,
  and Grohe]{morris2019weisfeiler}
Morris, C., Ritzert, M., Fey, M., Hamilton, W.~L., Lenssen, J.~E., Rattan, G.,
  and Grohe, M.
\newblock Weisfeiler and leman go neural: Higher-order graph neural networks.
\newblock In \emph{AAAI}, 2019.

\bibitem[Oono \& Suzuki(2019)Oono and Suzuki]{oono2019graph}
Oono, K. and Suzuki, T.
\newblock Graph neural networks exponentially lose expressive power for node
  classification.
\newblock In \emph{ICLR}, 2019.

\bibitem[Pei et~al.(2020)Pei, Wei, Chang, Lei, and Yang]{pei2020geom}
Pei, H., Wei, B., Chang, K. C.-C., Lei, Y., and Yang, B.
\newblock Geom-gcn: Geometric graph convolutional networks.
\newblock In \emph{ICLR}, 2020.

\bibitem[Rong et~al.(2019)Rong, Huang, Xu, and Huang]{rong2019dropedge}
Rong, Y., Huang, W., Xu, T., and Huang, J.
\newblock Dropedge: Towards deep graph convolutional networks on node
  classification.
\newblock In \emph{ICLR}, 2019.

\bibitem[Sandryhaila \& Moura(2013)Sandryhaila and
  Moura]{sandryhaila2013discrete}
Sandryhaila, A. and Moura, J.~M.
\newblock Discrete signal processing on graphs.
\newblock \emph{IEEE Trans. Signal Process}, 2013.

\bibitem[Shi \& Malik(2000)Shi and Malik]{shi2000normalized}
Shi, J. and Malik, J.
\newblock Normalized cuts and image segmentation.
\newblock \emph{TPAMI}, 2000.

\bibitem[Simonyan \& Zisserman(2014)Simonyan and Zisserman]{simonyan2014vggnet}
Simonyan, K. and Zisserman, A.
\newblock Very deep convolutional networks for large-scale image recognition.
\newblock \emph{arXiv:1409.1556}, 2014.

\bibitem[Veli{\v{c}}kovi{\'c} et~al.(2018)Veli{\v{c}}kovi{\'c}, Cucurull,
  Casanova, Romero, Lio, and Bengio]{velivckovic2017graph}
Veli{\v{c}}kovi{\'c}, P., Cucurull, G., Casanova, A., Romero, A., Lio, P., and
  Bengio, Y.
\newblock Graph attention networks.
\newblock In \emph{ICLR}, 2018.

\bibitem[Wang et~al.(2019)Wang, Zheng, Ye, Gan, Li, Song, Zhou, Ma, Yu, Gai,
  Xiao, He, Karypis, Li, and Zhang]{wang2019dgl}
Wang, M., Zheng, D., Ye, Z., Gan, Q., Li, M., Song, X., Zhou, J., Ma, C., Yu,
  L., Gai, Y., Xiao, T., He, T., Karypis, G., Li, J., and Zhang, Z.
\newblock Deep graph library: A graph-centric, highly-performant package for
  graph neural networks.
\newblock \emph{arXiv:1909.01315}, 2019.

\bibitem[Wu et~al.(2019)Wu, Zhang, Souza~Jr, Fifty, Yu, and
  Weinberger]{wu2019simplifying}
Wu, F., Zhang, T., Souza~Jr, A. H.~d., Fifty, C., Yu, T., and Weinberger, K.~Q.
\newblock Simplifying graph convolutional networks.
\newblock In \emph{ICML}, 2019.

\bibitem[Xu et~al.(2018)Xu, Li, Tian, Sonobe, Kawarabayashi, and
  Jegelka]{xu2018jknet}
Xu, K., Li, C., Tian, Y., Sonobe, T., Kawarabayashi, K.-i., and Jegelka, S.
\newblock Representation learning on graphs with jumping knowledge networks.
\newblock In \emph{ICML}, 2018.

\bibitem[Xu et~al.(2019)Xu, Hu, Leskovec, and Jegelka]{xu2018powerful}
Xu, K., Hu, W., Leskovec, J., and Jegelka, S.
\newblock How powerful are graph neural networks?
\newblock In \emph{ICLR}, 2019.

\bibitem[Zhou(2020)]{zhou2020universality}
Zhou, D.-X.
\newblock Universality of deep convolutional neural networks.
\newblock \emph{ACHA}, 2020.

\end{thebibliography}
\bibliographystyle{icml2021}

\appendix

\section{Remark on Definition 1} \label{supp:remark_def1}

Let us rewrite the $\Set{F}_K$ following Definition 1:
\begin{equation*}
	\Set{F}_K = \left\{ f: f(G, \Mat{x}) = \sum_{k = 0}^{K} \theta_k \Mat{A}(\G{G})^k \Mat{x}, \forall \theta_k \in \real \right\}.
\end{equation*}
We claim that functions $f \in \Set{F}_K: \Set{G} \times \real^N \rightarrow \real^N$ are all Linear Shift-Invariant (LSI) to adjacency matrix.
\begin{proof}
Given arbitrary graph $\G{G} \in \Set{G}$, any filter $\Mat{H}$ associated with it can be written as below:
\begin{equation*}
	\Mat{H}(G) = \sum_{k = 0}^{K} \theta_k \Mat{A}(G)^k = \Mat{U} \left( \sum_{k = 0}^{K} \theta_k \Mat{\Lambda}^k \right) \Mat{U}^T,
\end{equation*}
where $\Mat{A}(G) = \Mat{U} \Mat{\Lambda} \Mat{U}^T$ is the eigendecomposition of $\Mat{A}(G)$.
Therefore, $\Mat{H}(G)$ is also diagonalized by the eigenvectors of $\Mat{A}(G)$. By the Lemma \ref{lemma:simul_diag}:
\begin{lemma} \label{lemma:simul_diag}
	Diagonalizable matrices $\Mat{A}_1$ and $\Mat{A}_2$ are simultaneously diagonalized if and only if $\Mat{A}_1 \Mat{A}_2 = \Mat{A}_2 \Mat{A}_1$.
\end{lemma}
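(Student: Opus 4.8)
The plan is to prove the two implications separately; the ``only if'' direction is a one-line computation and the ``if'' direction carries the real content. For ``only if'', suppose $\Mat{P}$ is invertible with $\Mat{P}^{-1}\Mat{A}_1\Mat{P} = \Mat{D}_1$ and $\Mat{P}^{-1}\Mat{A}_2\Mat{P} = \Mat{D}_2$ both diagonal. Since diagonal matrices commute, $\Mat{A}_1\Mat{A}_2 = \Mat{P}\Mat{D}_1\Mat{D}_2\Mat{P}^{-1} = \Mat{P}\Mat{D}_2\Mat{D}_1\Mat{P}^{-1} = \Mat{A}_2\Mat{A}_1$, which is all that is needed.

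For ``if'', assume $\Mat{A}_1\Mat{A}_2 = \Mat{A}_2\Mat{A}_1$. Because $\Mat{A}_1$ is diagonalizable, the ambient space decomposes as a direct sum $\bigoplus_\lambda E_\lambda$ of eigenspaces $E_\lambda = \ker(\Mat{A}_1 - \lambda\Mat{I})$. First I would verify that each $E_\lambda$ is $\Mat{A}_2$-invariant: if $\Mat{A}_1\Mat{v} = \lambda\Mat{v}$ then $\Mat{A}_1(\Mat{A}_2\Mat{v}) = \Mat{A}_2(\Mat{A}_1\Mat{v}) = \lambda(\Mat{A}_2\Mat{v})$, so $\Mat{A}_2\Mat{v} \in E_\lambda$. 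Next I would show that the restriction $\Mat{A}_2|_{E_\lambda}$ is itself diagonalizable, using the criterion that a matrix is diagonalizable iff its minimal polynomial factors into distinct linear terms, together with the fact that the minimal polynomial of a restriction to an invariant subspace divides that of the whole operator. Consequently each $E_\lambda$ has a basis of eigenvectors of $\Mat{A}_2$, and every such vector is automatically an eigenvector of $\Mat{A}_1$ because it lies in $E_\lambda$. Concatenating these bases over all eigenvalues $\lambda$ of $\Mat{A}_1$ yields a basis of the whole space consisting of common eigenvectors, and the matrix $\Mat{P}$ whose columns are these vectors simultaneously diagonalizes $\Mat{A}_1$ and $\Mat{A}_2$.

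I expect the step that a diagonalizable operator restricted to an invariant subspace remains diagonalizable to be the main obstacle; the rest is routine bookkeeping with direct sums and change of basis. In the setting relevant to this paper the matrices are real symmetric (normalized adjacency matrices), which lets one bypass the minimal-polynomial argument: each $E_\lambda$ is then an $\Mat{A}_2$-invariant subspace on which $\Mat{A}_2$ acts as a symmetric operator, so the spectral theorem directly supplies an orthonormal eigenbasis of $E_\lambda$, and the same concatenation finishes the proof. I would present the general argument and remark on this shortcut.
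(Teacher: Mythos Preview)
Your proof is correct and is the standard textbook argument for this classical linear-algebra fact. The paper, however, does not supply its own proof of this lemma: it simply quotes the statement as a known result and invokes it to conclude that the polynomial filter $\Mat{H}(G)$ commutes with $\Mat{A}(G)$. So there is nothing in the paper to compare your argument against beyond noting that the authors treat the lemma as background. Your write-up (eigenspace decomposition of $\Mat{A}_1$, invariance of each $E_\lambda$ under $\Mat{A}_2$, diagonalizability of the restriction via the minimal-polynomial criterion, and concatenation of the resulting bases) is exactly the standard proof one finds in linear-algebra references, and your remark about the symmetric shortcut is appropriate for the paper's actual setting.
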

we say that $\Mat{H}(G)$ commutes with $\Mat{A}(G)$. For any $f \in \Set{F}_K$, $\Mat{A}(G)f(G, \Mat{x}) = f(G, \Mat{A}(G) \Mat{x})$.
% Recall that all linear transformations $\Mat{H}$ invariant to the shift matrix $\Mat{S}$ should have $\Mat{H}\Mat{S} = \Mat{S}\Mat{H}$.
\end{proof}

\section{Ring Isomorphism $\pi: \Set{F}_K \rightarrow \Set{T}_K$} \label{supp:dev_ring_iso}

We introduce a mathematical device that bridges the gap between the filter space $\Set{F}_K$ and the polynomial space $\real_{K}[x]$.
% Notice that, this construction is essential to the proof of Lemma \ref{thm:LSI} and Theorem \ref{thm:2nd_order}. 

Since $\Set{G}$ is finite, we can construct a block diagonal matrix $\Mat{T} \in \real^{N\lvert\Set{G}\rvert \times N\lvert\Set{G}\rvert}$, with adjacency matrix of every graph on the diagonal:
\begin{equation} \label{equ:block_diag}
	\Mat{T} = \begin{bmatrix}
		\Mat{A}(\G{G}_1) & & \\
		& \ddots & \\
		& & \Mat{A}(\G{G}_{\lvert\Set{G}\rvert}) \\
	\end{bmatrix} \in \real^{N\lvert\Set{G}\rvert \times N\lvert\Set{G}\rvert}.
\end{equation}

\begin{remark}
The spectrum capacity $\Capacity$ in Definition 2 represents the number of eigenvalues of $\Mat{T}$ without multiplicity.
\end{remark}

Eigenvalues of adjacency matrices signify graph similarity.
The spectrum capacity $\Capacity$ identifies a set of graphs by enumerating the structural patterns.
Even if the graph set goes extremely large (to guarantee the generalization capability), the distribution of spectrum provides the upper bound of $\Capacity$, so our theories remain their generality.

Now we construct a matrix space $\Set{T}_K$ by applying a ring homomorphism $\pi: \Set{F}_K \rightarrow \Set{T}_K$ to every element in $\Set{F}_K$:
\begin{equation}
	\pi: \sum_{k = 0}^{K} \theta_k \Mat{A}(\G{G})^k \mapsto \sum_{k = 0}^{K} \theta_k \Mat{T}^k.
\end{equation}
Concretely, we write the matrix space $\Set{T}$ as follows:
\begin{equation} \label{equ:matrix_poly}
	\Set{T}_K = \left\{ \Mat{H}: \Mat{H} = \sum_{k = 0}^{K} \theta_k \Mat{T}^k, \forall \theta_k \in \real \right\}.
\end{equation}

In the rest section, we prove that $\pi$ is a ring isomorphism.

\begin{figure*}[t!] %[htbp]
	\centering
	\includegraphics[width=\textwidth]{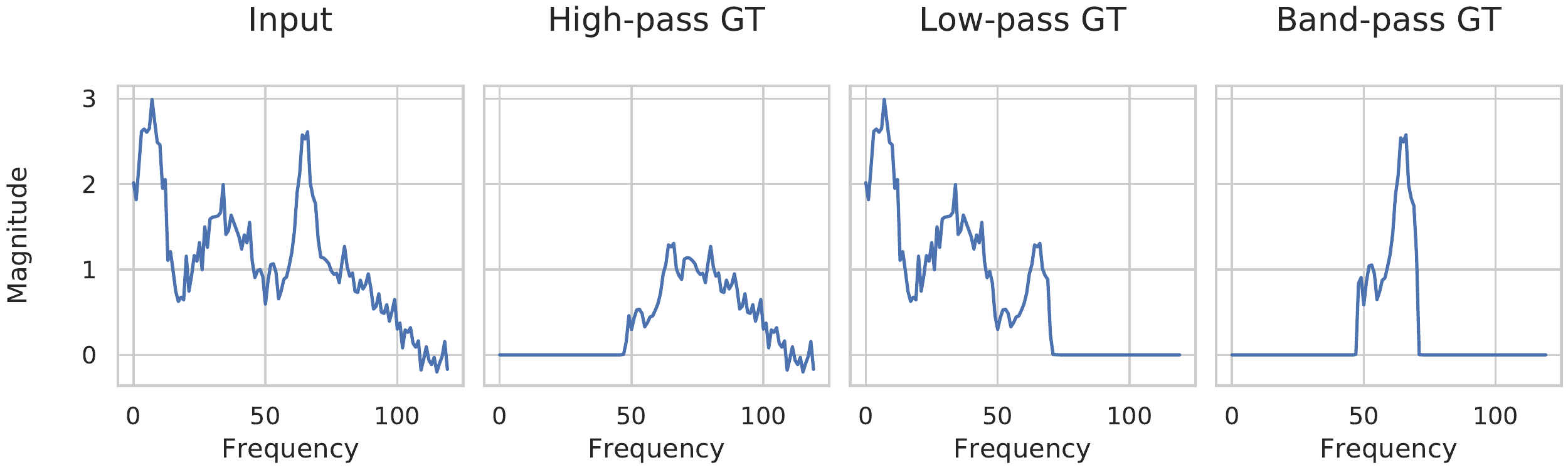}
	\caption{An example of graph spectrum in our SGS dataset and its corresponding high-pass, low-pass and band-pass filtered output using our hand-crafted filters.}
	\label{fig:vis_sgs_dataset}
\end{figure*}

\begin{proof}
First, we can verify that $\pi$ is a ring homomorphism because it is invariant to ``summation" and ``multiplication".
Second, we can prove its surjectivity by the definition of $\Set{T}_K$ (cf. Equation \ref{equ:matrix_poly}).

Finally, we show its injectivity as follows:
Consider any pair of $f_1, f_2 \in \Set{F}_K, f_1 \ne f_2$ with parameters $\alpha_k, \beta_k \in \real, k=0, \cdots, K$, there exists $\G{G}_j \in \Set{G}$ and $\Mat{x} \in \real^{N}$ such that $f_1(\G{G}_j, \Mat{x}) \ne f_2(\G{G}_j, \Mat{x})$.
After applying $\pi$, we have their images $\Mat{H}_1 = \pi(f_1), \Mat{H}_2 = \pi(f_2)$.
Let $\Mat{\xi} = \begin{bmatrix} \Mat{0}_{N(j-1)}^T & \Mat{x}^T & \Mat{0}_{N(\lvert \Set{G} \rvert - j)}^T \end{bmatrix}^T$,
% Pad $\Mat{x}$ with zeros:
% \begin{equation*}
% 	\Mat{\xi} = \begin{bmatrix}
% 		\Mat{0}_{N(j-1)}^T & \Mat{x}^T & \Mat{0}_{N(\lvert \Set{G} \rvert - j)}^T
% 	\end{bmatrix}^T,
% \end{equation*}
where $\Mat{0}_N$ denote the all-zero vector of length $N$, then we have:
% Now apply $\Mat{H}_1, \Mat{H}_2$ to $\Mat{\xi}$:
\begin{align*}
	\Mat{H}_1 \Mat{\xi} &= \begin{bmatrix}
		\Mat{0}_{N(j-1)}^T & \left( \sum_{k = 0}^{K} \alpha_k \Mat{A}(\G{G}_j)^k \Mat{x} \right) ^T & \Mat{0}_{N(\lvert \Set{G} \rvert - j)}^T
	\end{bmatrix}^T  \\
	&= \begin{bmatrix}
		\Mat{0}_{N(j-1)}^T & f_1(\G{G}_j, \Mat{x})^T & \Mat{0}_{N(\lvert \Set{G} \rvert - j)}^T
	\end{bmatrix}^T,
\end{align*}
\begin{align*}
	\Mat{H}_2 \Mat{\xi} &= \begin{bmatrix}
		\Mat{0}_{N(j-1)}^T & \left( \sum_{k = 0}^{K} \beta_k \Mat{A}(\G{G}_j)^k \Mat{x} \right) ^T & \Mat{0}_{N(\lvert \Set{G} \rvert - j)}^T
	\end{bmatrix}^T \\
	&= \begin{bmatrix}
		\Mat{0}_{N(j-1)}^T & f_2(\G{G}_j, \Mat{x})^T & \Mat{0}_{N(\lvert \Set{G} \rvert - j)}^T
	\end{bmatrix}^T.
\end{align*}
Hence, $\Mat{H}_1 \ne \Mat{H}_2$ concludes the injectivity.
\end{proof}

\section{Proof of Lemma 1} \label{supp:prf_dim}

% We first show $\Set{A}$ is a vector space, then we leverage the isomorphism to have equality $\dim \Set{A} = \dim \Set{T}$. Figuring out the dimension of $\Set{T}$ is much more tractable.

\begin{proof}
One can show $\Set{F}_K$ is a vector space by verifying the linear combination over $\Set{F}_K$ is closed (or simply implied from the ring isomorphism $\pi$).

Due to isomorphism, $\dim \Set{F}_K = \dim \Set{T}_K$.
Then Lemma 1 follows from Theorem 3 of \citet{sandryhaila2013discrete}. We briefly conclude the proof as below.

Let $m(x)$ denote the minimal polynomial of $\Mat{T}$. We have $\Capacity = \deg m(x)$.
Suppose $K+1 < \Capacity$. First, $\dim \Set{T}_K$ cannot be larger than $K+1$, because $\left\{ \Mat{I}, \Mat{T}, \cdots, \Mat{T}^K \right\}$ is a spanning set. If $\dim \Set{T}_K < K+1$, then there exists some polynomial $p(x)$ with $\deg p(x) < K$, such that $p(\Mat{T}) = \Mat{0}$. This contradicts the minimality of $m(x)$. Therefore, $\dim \Set{T}_K$ can only be $K+1$.

Suppose $K+1 \ge \Capacity$. For any $\Mat{H} = h(\Mat{T})$ where polynomial $h(x)$ has $\deg h(x) \le K$. By polynomial division, there exists unique polynomials $q(x)$ and $r(x)$ such that
\begin{align} \label{equ:poly_division}
	h(x) = q(x) m(x) + r(x),
\end{align}
where $\deg r(x) < \deg m(x) = \Capacity$. We insert $\Mat{T}$ into Equation \ref{equ:poly_division} as below:
\begin{align*}
	h(\Mat{T}) = q(\Mat{T}) m(\Mat{T}) + r(\Mat{T}) = q(\Mat{T}) \Mat{0} + r(\Mat{T}) = r(\Mat{T}).
\end{align*}
Therefore, $\left\{ \Mat{I}, \Mat{T}, \cdots, \Mat{T}^{\Capacity - 1} \right\}$ form a basis of $\Set{T}_K$, i.e., $\dim \Set{T}_K = \Capacity$.
\end{proof}

\section{Proof of Lemma 2} \label{supp:prf_ring_iso}
\begin{proof}
Consider a mapping $\varphi: \Set{T}_K \rightarrow \real_{K}[x]$:
\begin{equation}
    \varphi: \sum_{k = 0}^{K} \theta_k \Mat{T}^k \mapsto \sum_{k = 0}^{K} \theta_k x^k.
\end{equation}
When $K+1 \le \Capacity$, $\dim \Set{T}_K = \dim \real_K[x]$ (as $\deg m(x) = \Capacity$), which implies $\varphi$ is a ring isomorphism as well.
Since function composition preserves isomorphism property, we can conclude the proof by showing that $\tau = \varphi \circ \pi$.
\end{proof}

\begin{remark}
The assumption that each graph has the same number of vertices is made only for the sake of simplicity. Lemma 1 and Lemma 2 still hold when the vertex numbers are varying, since the construction of $\Mat{T}$ (cf. Equation \ref{equ:block_diag}) is independent of this assumption.
\end{remark}

\begin{remark}
The graph set $\Set{G}$ need to be finite, otherwise $\Capacity$ might be uncountable. We leave the discussion on infinite graph sets for future study.
\end{remark}

% Compared with \citet{chenWHDL2020gcnii}, our theoretical framework makes two essential improvements: 1) we generalize graph filter space to a set of graphs. This enables our theory applicable for both transductive and inductive learning 2) we upper bound the dimension (or degree) of the graph filter space. This guarantees polynomial decomposition happening on finite-degree polynomials.

\section{Synthetic Graph Spectrum Dataset} \label{supp:sgs_dataset}

Our Synthetic Graph Spectrum (SGS) dataset is designed for testing the filter fitting power of spectral GCNs. It includes 3 types of graph signal filters: High-Pass (HP), Low-Pass (LP) and Band-Pass (BP) filters. For each type, we generate 1k, 1k and 2k undirected graphs along with graph signals and groundtruth response in training set, validation set and test set, respectively. Each graph has 80\textasciitilde 120 nodes and 80\textasciitilde 350 edges. Models are trained on each dataset to learn the corresponding filter by supervision on the MAE loss.

% We design the groundtruth filters $\Mat{F}_{HP}^*$, $\Mat{F}_{LP}^*$, $\Mat{F}_{BP}^*$ in Equation \ref{equ:sgs_filters}, where $\Mat{U}$ and $\Mat{\Lambda}$ are eigenvectors and eigenvalues of the normalized graph Laplacian respectively, i.e., $\Mat{L}=\Mat{I}-\Mat{A}(G)=\Mat{U}\Mat{\Lambda}\Mat{U^T}$, where $\Mat{\Lambda}=\diag(\lambda_1, \dots, \lambda_N)$, $\lambda_N \geq \cdots \geq \lambda_1$, $\Mat{U}=\begin{bmatrix} \Mat{u}_1 & \cdots & \Mat{u}_N \end{bmatrix}$, where $\Mat{u}_i$ is the eigenvector associated with $\lambda_i$.

% Suppose arbitrary graph $G = (\Set{V}, \Set{E})$ has normalized adjacency matrix $\Mat{A}$, then $\Mat{L}=\Mat{I}-\Mat{A}(G)=\Mat{U}\Mat{\Lambda}\Mat{U^T}$, where $\Mat{\Lambda}=\diag(\lambda_1, \dots, \lambda_N)$ with $\lambda_N \geq \cdots \geq \lambda_1$ are eigenvalues, $\Mat{U}=\begin{bmatrix} \Mat{u}_1 & \cdots & \Mat{u}_N \end{bmatrix}$ are corresponding eigenvectors.

For each sample, we generate an undirected Erd\H{o}s-R\'enyi random graph $G = (\Set{V}, \Set{E})$ with normalized adjacency matrix $\Mat{A}$, i.e., the existence of the edge between each pair of nodes accords to a Bernoulli distribution $\operatorname{Bern}(p)$. In our experiments, we set $p = 0.02$ to satisfy $\lvert\Set{E}(\G{G})\rvert = O(N)$. We also compute $\Mat{L}=\Mat{I}-\Mat{A}=\Mat{U}\Mat{\Lambda}\Mat{U^T}$, where $\Mat{\Lambda}=\diag(\lambda_1, \dots, \lambda_N)$ with $\lambda_N \geq \cdots \geq \lambda_1$ are eigenvalues, $\Mat{U}=\begin{bmatrix} \Mat{u}_1 & \cdots & \Mat{u}_N \end{bmatrix}$ are corresponding eigenvectors.

\begin{figure*}[t!]
	\begin{subfigure}[b]{1.0\textwidth}
		\centering
		\includegraphics[width=1.0\textwidth]{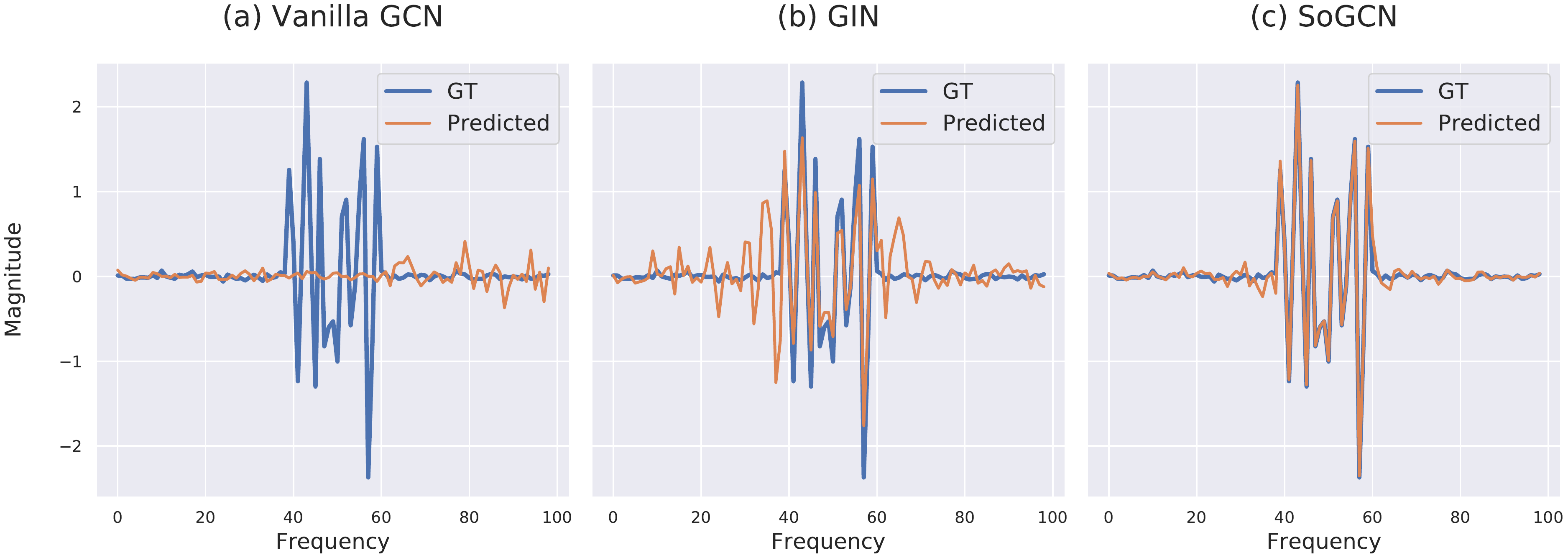}
	\end{subfigure}
	
	\caption{Visualize the spectrum of outputs from vanilla GCN, GIN and SoGCN on the SGS Band-Pass dataset.}
	\label{fig:vis_sgs_output_spectrum}
\end{figure*}

Next, we generate input graph signals $\Mat{s}$ on the spectral domain. Independent sampling for each frequency from a distribution tends to generate  white noises. Hence, we synthesize spectrum by summing random functions. We notice the mixture of beta function $\operatorname{Beta}(a,b)$ and Gaussian function $\operatorname{Norm}(\mu, \sigma)$ is a powerful model to construct diverse curves by tuning shape parameters $(a, b)$ and $(\mu, \sigma)$.
% Also, Gaussian function $\operatorname{Norm}(\mu, \sigma)$ is able to produce diverse bell-shaped curves by tuning $\mu$ and $\sigma$.
We sum two discretized beta functions and four discretized Gaussian functions with random parameters to generate signal spectrums. Equation \ref{equ:sgs_sig_gen} elaborates the generation process and hyper-parameter chosen in our experiments, where $g[x; a, b]$ is the PDF of $\operatorname{Beta}(a,b)$ distribution, $f[x; \mu, \sigma]$ denotes the PDF of $\operatorname{Norm}(\mu, \sigma)$ distribution.
\begin{equation} \label{equ:sgs_sig_gen}
    \begin{split}
    & \Mat{s}_t = \sum_{i=1}^{2} g[t/N; a_i, b_i] + \sum_{j=1}^{4}c_j f[t; \mu_j, \sigma_j], t\in [N] \\
    & a_i, b_i \sim \operatorname{Unif}\{0.1, 5\}, \quad \mu_j\sim \operatorname{Unif}\{0, N\}, \\
    & \sigma_j\sim \left. \operatorname{Unif}\left\{\frac{N}{(j+1)}, \frac{N}{j}\right\} \middle/9 \right., \\
    & c_j \sim \frac{\operatorname{Unif}\{0.5, 2\}}{\operatorname{max}_{x\in [N]}f[x; \mu_j, \sigma_j]},
    \end{split}
\end{equation}
% In most real-world cases, graph signals are represented in vertex-domain. With a generated graph and its spectral signals, we can retrieve the vertex-domain signals by inverse graph Fourier transformation, i.e. $\Mat{\hat{x}} = \Mat{U}\Mat{s}$. Then Gaussian noise is added to the vertex-domain signals to simulate observation errors: $\Mat{x} = \Mat{\hat{x}} + \epsilon, \epsilon\sim \operatorname{Norm(0, c)}, c\sim\operatorname{Unif}(0.05, 0.35)$. For supervising purpose, we retrieve the groundtruth of each filter's response by applying each filter to the generated signals: $\Mat{F}_{k}^*\Mat{x}, k\in\{HP,LP,BP\}$.
We can retrieve the vertex-domain signals via inverse graph Fourier transformation: $\Mat{\hat{x}} = \Mat{U}\Mat{s}$. Then Gaussian noise is added to the vertex-domain signals to simulate observation errors: $\Mat{x} = \Mat{\hat{x}} + \epsilon, \epsilon\sim \operatorname{Norm(0, c)}, c\sim\operatorname{Unif}(0.05, 0.35)$.

We design three filters $\Mat{F}_{HP}^*$, $\Mat{F}_{LP}^*$, $\Mat{F}_{BP}^*$ in Equation \ref{equ:sgs_filters}:
\begin{equation} \label{equ:sgs_filters}
    \begin{split}
    & f_{HP}^*(\Mat{s}) = \frac{1}{1 + \zeta(\Mat{s};50, 1)}, \\
    & f_{LP}^*(\Mat{s}) = 1 - \frac{1}{1 + \zeta(\Mat{s};50, 1)} \\
    & f_{BP}^*(\Mat{s}) = \frac{-1}{1 + \zeta(\Mat{s};100, 1.05)}  + \frac{1}{1 + \zeta(\Mat{s};100, 0.95)}, \\\\
    & \Mat{F}_{k}^* = \Mat{U}f_{k}^*(\Mat{\Lambda})\Mat{U^T}, k\in\{HP, LP, BP\},
    \end{split}
\end{equation}
where $\zeta(\Mat{s};\alpha, \beta) = \operatorname{exp}\{-\alpha(\Mat{s}-\beta)\}$. For supervising purpose, we applying each filter to synthetic inputs to generate the groundtruth output: $\Mat{y} = \Mat{F}_{k}^*\Mat{x}, k\in\{HP,LP,BP\}$.
Figure \ref{fig:vis_sgs_dataset} illustrates an example of the generated spectral signals and the groundtruth responses of three filters.

\section{More Visualizations of Spectrum} \label{supp:more_vis_spectrum}

For multi-channel node signals $\Mat{X} \in \mathbb{R}^{N\times D}$, where $N$ is the number of nodes and $D$ is the number of signal channels, the spectrum of $\Mat{X}$ is computed by $\Mat{S} = \Mat{U}^T\Mat{X}$. More information about the graph spectrum and graph Fourier transformation can be found in \citet{sandryhaila2013discrete}.

Figure~\ref{fig:vis_sgs_output_spectrum} shows the output spectrum of vanilla GCN, GIN and SoGCN on the synthetic Band-Pass dataset. The visualizations are consistent with the results in Table~2 and Figure~3 in the main text. Vanilla GCN almost loses all the band-pass frequency, resulting in very poor performance. GIN learns to pass a part of middle-frequency band but still has a distance from the groundtruth. SoGCN's filtering response is close to the groundtruth response, showing its strong ability to represent graph signal filters.

We arbitrarily sample graph data from the ZINC dataset as input and visualize the output spectrum of vanilla GCN, SoGCN and their GRU variants in Figure~\ref{fig:more_vis_spectrum}. Each curve in the visualization figure represents the spectrum of each output channel, i.e., each column of $\Mat{S}$ is plotted as a curve.

\begin{figure*}[t!] %[htbp]
	\begin{subfigure}[b]{1.0\textwidth}
		\centering
		\includegraphics[width=0.95\textwidth]{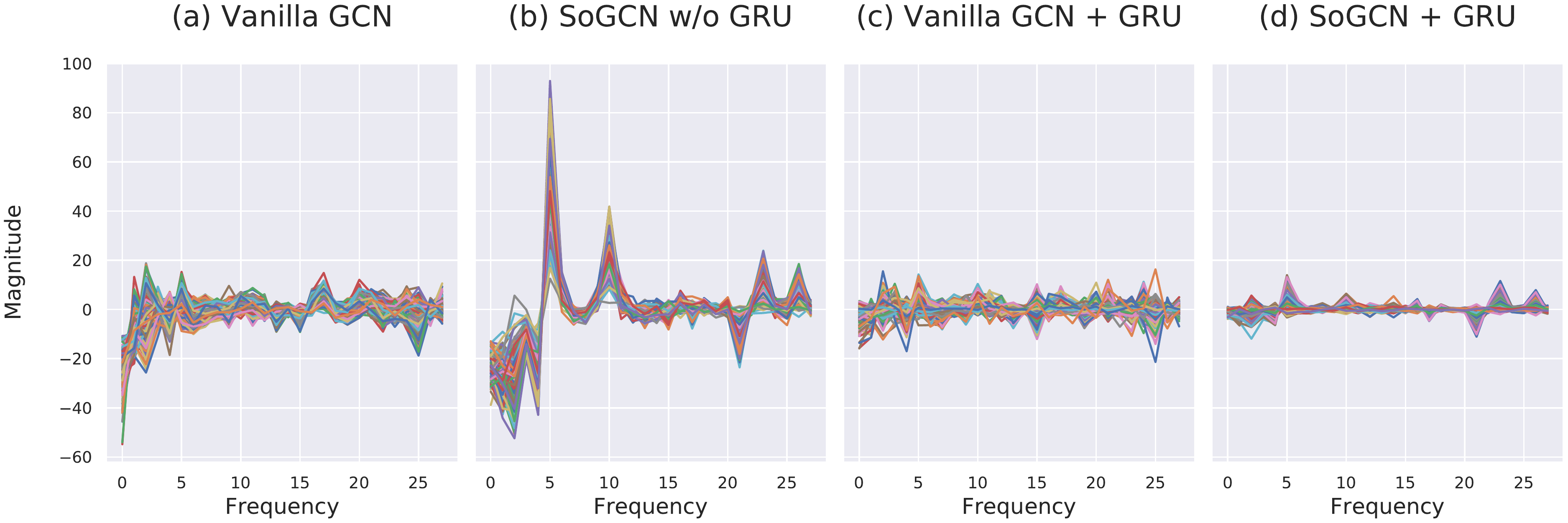}
	\end{subfigure}
	\begin{subfigure}[b]{1.0\textwidth}
		\centering
		\includegraphics[width=0.95\textwidth]{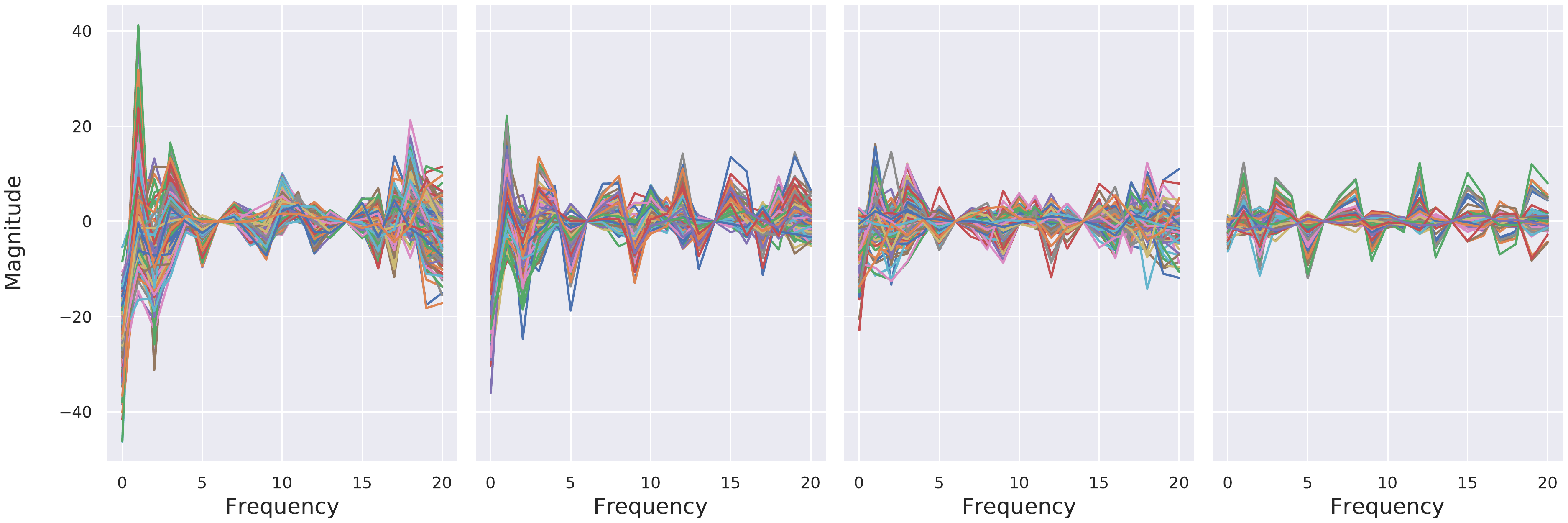}
	\end{subfigure}
	\begin{subfigure}[b]{1.0\textwidth}
		\centering
		\includegraphics[width=0.95\textwidth]{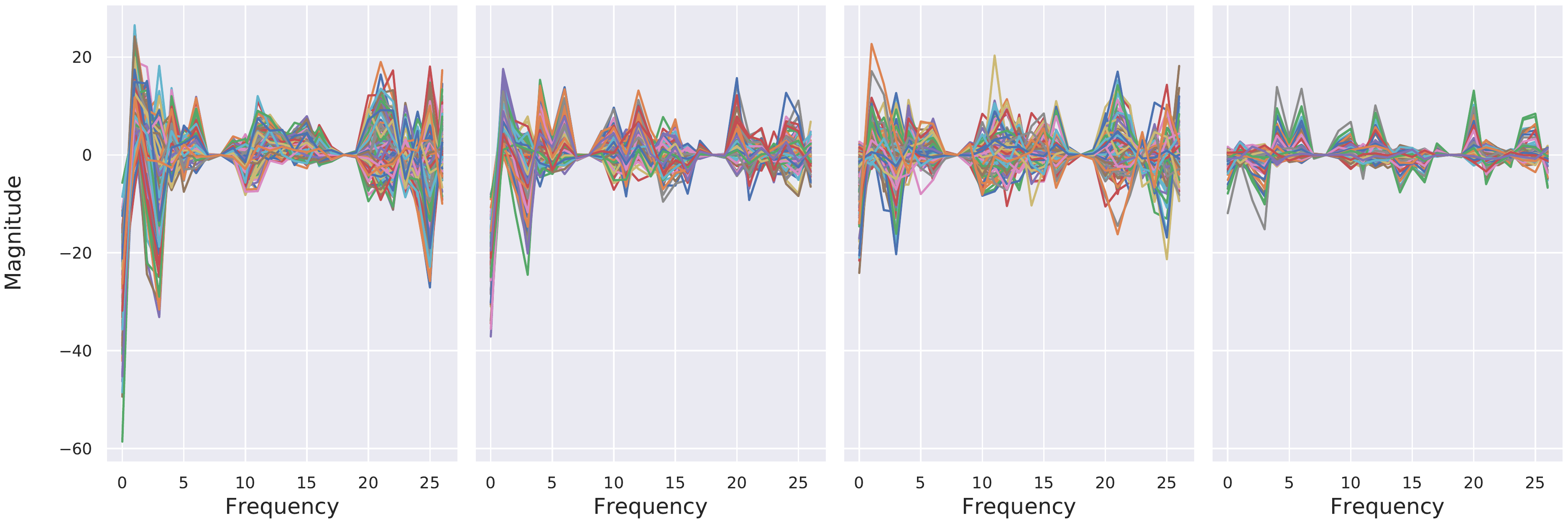}
	\end{subfigure}
	
	\caption{More visualizations of output spectrum on the ZINC dataset.}
	\label{fig:more_vis_spectrum}
\end{figure*}

\section{More Experiments} \label{supp:more_expr}

\begin{figure*}[t!] %[htbp]
	\begin{subfigure}[b]{0.5\textwidth}
		\centering
		\includegraphics[width=0.9\textwidth]{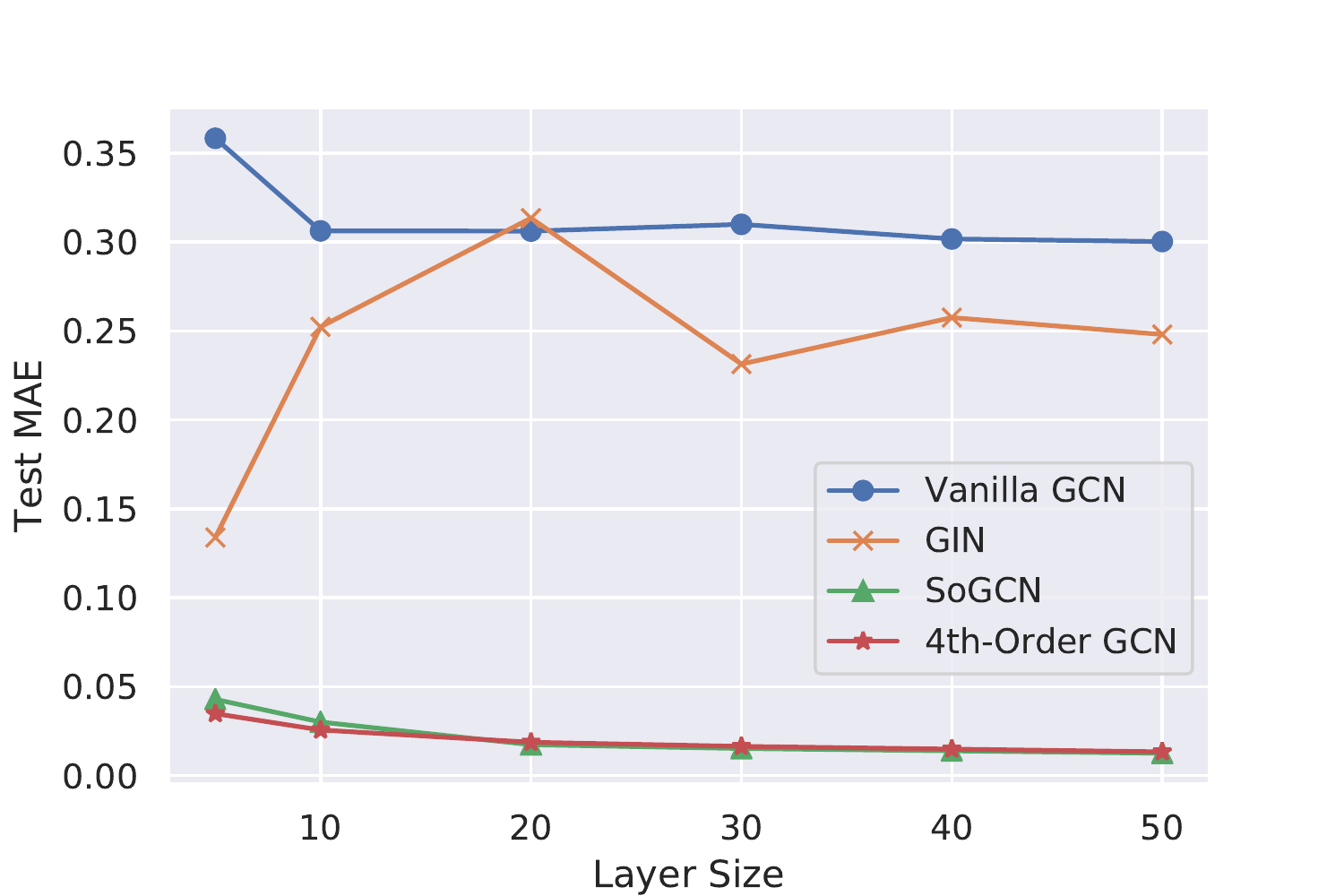}
		\caption{Relation on High-Pass dataset}
	\end{subfigure}
	\begin{subfigure}[b]{0.5\textwidth}
		\centering
		\includegraphics[width=0.9\textwidth]{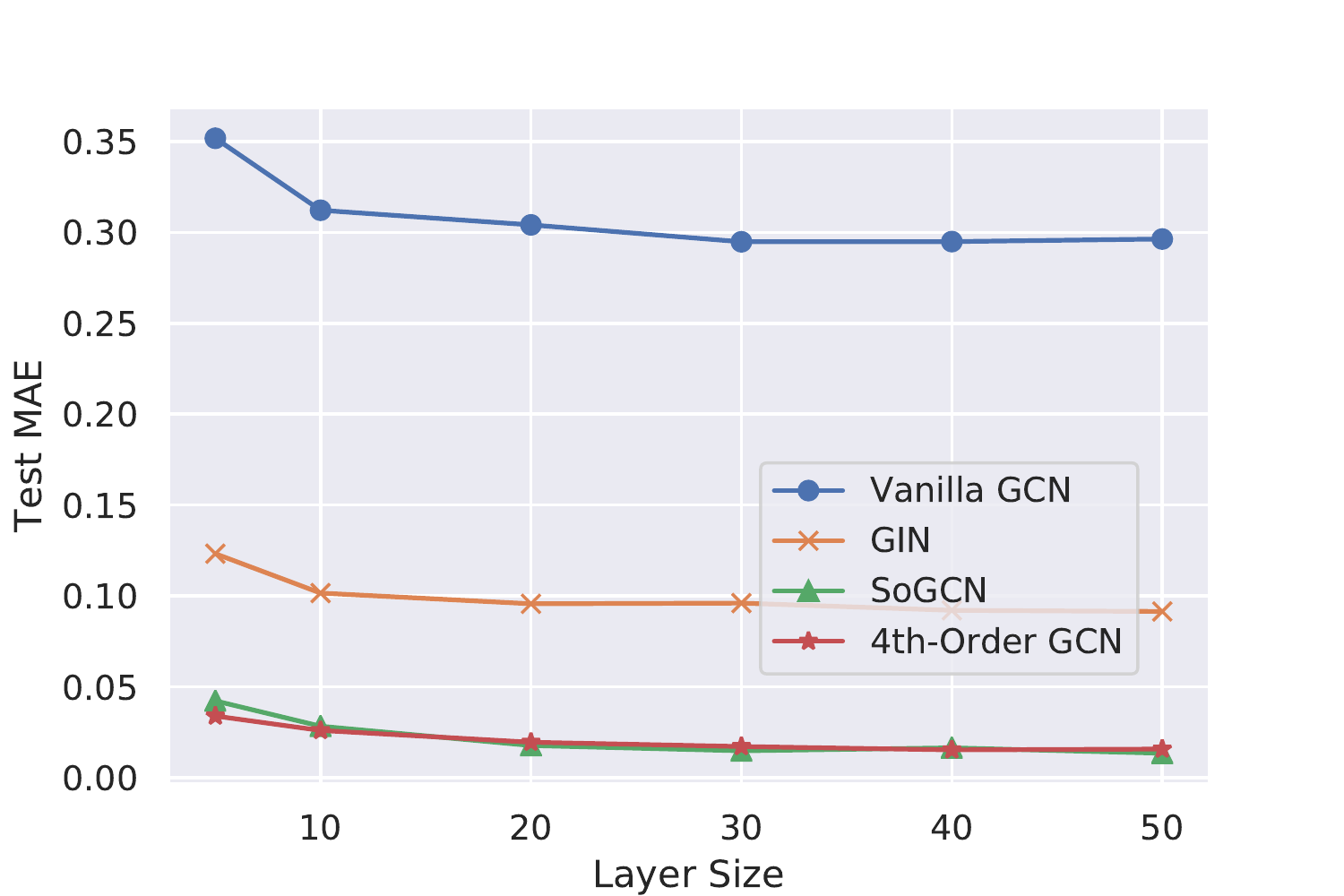}
		\caption{Relation on Low-Pass dataset}
	\end{subfigure}

	\caption{Relations between test MAE and layer size. Each model has 16
channels per hidden layer with varying layer size.}
	\label{fig:deeper_on_lowpass_highpass}
\end{figure*}

\subsection{Additional Experiments on SGS Dataset}

We supplement two experiments to compare vanilla GCN \citep{kipf2016semi}, GIN \citep{xu2018powerful}, SoGCN and 4th-order GCN on synthetic High-Pass and Low-Pass datasets, respectively. With Figure \ref{fig:deeper_on_lowpass_highpass}, we conclude that SoGCN and high-order GCNs perform closely on High-Pass and Low-Pass datasets and achieve remarkable filtering capability, while vanilla GCN and GIN cannot converge to considerable results by increasing the layer size. This conclusion is consistent with the previous results on Band-Pass dataset presented in the main text.

\subsection{Additional Experiments on OGB Benchmark}

In the main text, we have demonstrated our results on ogb-protein dataset \citep{hu2020ogb} for node-level tasks. In this subsection, we also show our SoGCN's effectiveness on ogb-molhiv dataset \citep{hu2020ogb} for graph-level tasks. As the same with experiments on ogb-protein, we evaluate different GCN models in terms of their total parameter numbers, training time per epoch, and test ROC-AUC.

\paragraph{Experiment Setup} Again, we choose vanilla GCN, GIN, GraphSage\citep{hamilton2017inductive}, APPNP \citep{klicpera2018predict}, GCNII \citep{chenWHDL2020gcnii}, ARMA \citep{bianchi2019graph}, our SoGCN, and two high-order GCNs for performance comparison.
We adopt the example code of vanilla GCN and GIN provided in OGB. We reimplemented GCNII, GraphSage, APPNP, ARMA based on the official code in PyTorch Geometric \citep{Fey/Lenssen/2019}.
According to the benchmark's guideline, we add edge features to fan-out node features while propagation.
Every model has the same depth and width, as well as other modules.
The timing, training and evaluation procedures conform with the descriptions in our main text. We train vanilla GCN, GIN, APPNP, GraphSage for \textasciitilde100 epochs, and SoGCN, higher-order GCNs, GCNII, ARMA for \textasciitilde500 epochs.

\begin{table}[t!]
    \begin{center}
        \caption{The performance of graph-level multi-label classification on ogb-molhiv dataset. We compare each model condering the following dimensions: the number of parameters, training time (in seconds) per epoch (ep.), and final test ROC-AUC (\%).}
        \vspace{.08in}
        \label{tab:res_ogbmolhiv}
        \footnotesize
        \renewcommand{\arraystretch}{1.2}
        \scalebox{0.95}{
        \begin{tabular}{l|c|c|c}
            \hline
            \multirow{2}{*}{\textbf{Model}} & \multicolumn{3}{c}{\textbf{ogb-molhiv}} \\ \cline{2-4}
            & \#Param & Time / Ep. & ROC-AUC $\pm$ s.d. \\
            \hline
            Vanilla GCN & 527,701 & 25.57 $\pm$ 1.37 & 76.06 $\pm$ 0.97 \\
            GIN & 980,706 & 29.01 $\pm$ 1.24 & 75.58 $\pm$ 1.40 \\
            GCNII & 524,701 & 24.19 $\pm$ 1.26 & 77.04 $\pm$ 1.03 \\
            APPNP & 327,001 & 13.56 $\pm$ 1.32 & 68.00 $\pm$ 1.36 \\
            GraphSage & 976,201 & 24.43 $\pm$ 1.39 & 76.90 $\pm$ 1.36 \\
            ARMA & 8,188,201 & 43.14 $\pm$ 0.99 & 76.91 $\pm$ 1.75 \\
            \hline
            SoGCN & 1,426,201 & 27.02 $\pm$ 1.28 & \textbf{77.26 $\pm$ 0.85} \\
            4th-Order GCN & 2,326,201 & 32.24 $\pm$ 1.10 & 77.24 $\pm$ 1.21 \\
            6th-Order GCN & 3,226,201 & 37.64 $\pm$ 1.15 & 77.10 $\pm$ 0.72 \\
            \hline
        \end{tabular}
        }
    \end{center}
\end{table}

\paragraph{Results and Discussion.} Table \ref{tab:res_ogbmolhiv} demonstrates the ROC-AUC score for each model on ogb-molhiv dataset. We reach the same conclusion with our main text.
On ogb-molhiv dataset, we notice that GCNII is another lightweight yet effective model. However, GCNII only allows inputs whose channel number equals to output dimension. One needs to add additional blocks (e.g., linear modules) to support varying hidden dimensions, which incorporates more parameters and higher complexity (e.g., on ogb-protein dataset).

\section{Implementation Details} \label{supp:impl_details}

We open source our implementation of SoGCN at \href{https://github.com/yuehaowang/SoGCN}{https://github.com/yuehaowang/SoGCN}. All of our code, datasets, hyper-parameters, and runtime configurations can be found there.
% All experiments are run on an Nvidia Quadro P6000 GPU.
% The runtime environment for SGS and GNN benchmark \citep{dwivedi2020benchmarking} is Python 3.7 + PyTorch 1.3.1 + DGL 0.4.2 + Cuda Toolkit 10.0. The runtime environment for OGB benchmark \citep{hu2020ogb} is Python 3.7, + PyTorch 1.4.0 + PyTorch Geometric 1.6.3 + OGB 1.2.4 + Cuda Toolkit 10.1.

\subsection{Second-Order Graph Convolution}

Our SoGC can be implemented using a message-passing scheme \citep{hamilton2017inductive} (cf. Equation \ref{equ:sogc_mp}). We regard the normalized adjacency matrix $\Mat{A}(\G{G})$ as a one-hop aggregator (message propagator). When we compute the power of $\Mat{A}(\G{G})$, we invoke the propagator multiple times. After passing the messages twice, we transform and mix up aggregated information from two hops via a linear block.
\begin{align} \label{equ:sogc_mp}
    \begin{split}
    & \Mat{h}_v^{(1)} = \sum_{u \in \Set{N}(v)} \frac{1}{\sqrt{d_v d_u}} \Mat{x}_u, \\
    & \Mat{h}_v^{(2)} = \sum_{u \in \Set{N}(v)} \frac{1}{\sqrt{d_v d_u}} \Mat{h}_u^{(1)}, \\
    & \Mat{y}_v = \Mat{\Theta}_2 \Mat{h}_v^{(2)} + \Mat{\Theta}_1 \Mat{h}_v^{(1)} + \Mat{\Theta}_0 \Mat{x}_v,
    \end{split}
\end{align}
where $\Mat{x}_v \in \real^{E}$ is the input feature vector for node $v \in \Set{V}(\G{G})$, $\Mat{y}_v \in \real^{F}$ denotes the output for node $v$. $d_v$ is the degree for vertex $v$, $\Set{N}(v)$ is the set of $v$'s neighbor vertices. $\Mat{h}_v^{(1)}$ is the feature representation of $v$'s first-hop neighborhood. It can be computed by aggregating information once from the directly neighboring nodes. $\Mat{h}_v^{(2)}$ is the feature representation of $v$'s second-hop neighborhood. It can be computed by feature aggregation upon neighbors' $\Mat{h}_u^{(1)}, u \in \Set{N}(v)$. $\Mat{\Theta}_i \in \real^{F \times E}, i = 0, 1, 2$ are the weight matrices (a.k.a. layer parameters).

Our design can reduce computational time by reusing previously aggregated information and preventing power operations on $\Mat{A}(\G{G})$. 
In practice, our SoGC is easy to implement. Our message-passing design conforms to mainstream graph learning frameworks, such as Deep Graph Library \citep{wang2019dgl} and PyTorch Geometric \citep{Fey/Lenssen/2019}.
One can simply add another group of parameters and invoke the ``propagation" method of vanilla GC \citep{kipf2016semi} twice to simulate our SoGC. For the sake of clarity, we provide the pseudo-code for general $K$-order GCs in Algorithm \ref{alg:sogc}. Our SoGC can be called by passing $K = 2$.

\begin{algorithm}[h!]
\caption{$K$-Order Graph Convolution}
\label{alg:sogc}
\begin{algorithmic}
\STATE {\bfseries Input:} Graph $\G{G} = (\Set{V}, \Set{E})$, node degrees $\{ d_v \in \mathbb{N}, \forall v \in \Set{V} \}$, input features $\{\Mat{x}_v \in \real^E, \forall v \in \Set{V} \}$, GC order $K$, weight matrices $\Mat{\Theta}_i \in \real^{F \times E}, i = 0, \cdots K$.
\STATE {\bfseries Output:} Feature representation $\Mat{y}_v \in \real^F, \forall v \in \Set{V}$.
\STATE $\Mat{h}^{(0)}_v \gets \Mat{x}_v, \forall v \in \Set{V}$
\STATE $\Mat{y}_v \gets \Mat{\Theta}_0 \Mat{h}^{(0)}_v, \forall v \in \Set{V}$
\FOR{$t=1$ {\bfseries to} $K$}
    \FOR{$v \in \Set{V}$}
        \STATE $\left. \Mat{h}_v^{(t)} \gets \sum_{u \in \Set{N}(v)} \Mat{h}_u^{(t-1)} \middle/ {\sqrt{d_v d_u}} \right. $
    \ENDFOR
    \STATE $\Mat{y}_v \gets \Mat{y}_v + \Mat{\Theta}_t \Mat{h}^{(t)}_v, \forall v \in \Set{V}$
\ENDFOR
\STATE {\bfseries Return} $\{ \Mat{y}_v \in \real^F, \forall v \in \Set{V} \}$
\end{algorithmic}
\end{algorithm}

\subsection{Gated Recurrent Unit}
\label{supp:gru}

We supplement two motivations behind using Gated Recurrent Unit (GRU) \citep{cho2014learning}:
1) GRU has been served as a basic building block in message-passing GNN architectures \citep{li2016gated, gilmer2017neural}. We make an explorative attempt to first introduce them into spectral GCNs.
2) By selectively maintaining information from previous layer and canceling the dominance of DC components (Figure \ref{fig:more_vis_spectrum}), GRU can also relieve the side-effect of ReLU, which is proved to be a special low-pass filter \citep{oono2019graph, cai2020note}. 

Similar to \citet{li2016gated, gilmer2017neural}, we appends a shared GRU module after each GC layer, which takes the signals before the GC layers as the hidden state, after the GC layers as the current input. We formulate its implementation by replacing Equation 10 with Equation \ref{equ:sogcn_gru} as below.
\begin{equation} \label{equ:sogcn_gru}
    \begin{split}
    &\Mat{X}_{conv}^{(l)} = f_2^{(l)} \left( \Mat{X}^{(l-1)}; \Mat{\Theta}^{(l)} \right) \\
	& \Mat{X}^{(l+1)} = \operatorname{GRU}\left( \operatorname{ReLU}\left(\Mat{X}_{conv}^{(l+1)}\right), \Mat{X}^{(l)}; \Mat{\Omega} \right),
    \end{split}
\end{equation}
where $\Mat{X}_{conv}^{(l+1)}$ is the input, $\Mat{X}^{(l)}$ represents the hidden state, $\Mat{\Omega}$ denotes parameters of the GRU.
Figure \ref{fig:more_vis_spectrum} illustrates the spectrum outputs of vanilla GCN + GRU and SoGCN + GRU. One can see, without filtering power of SoGCN, vanilla GCN + GRU fails to extract sharp patterns on the spectrum. Thereby, we suggest that it is SoGC that mainly contributes to the higher expressiveness.

\end{document}